\documentclass[11pt]{article}

\usepackage[preprint]{acl}

\usepackage{times}
\usepackage{latexsym}

\usepackage[T1]{fontenc}

\usepackage[utf8]{inputenc}

\usepackage{microtype}

\usepackage{inconsolata}

\usepackage{graphicx}

\usepackage{hyperref}
\usepackage{amsmath}
\usepackage{amssymb}
\usepackage{mathtools}
\usepackage{amsthm}
\usepackage{tikz}
\usepackage{bm}
\usepackage[most]{tcolorbox}
\tcbuselibrary{breakable}
\usepackage{physics}
\usepackage[capitalize,noabbrev]{cleveref}
\usepackage{multirow}
\usepackage[table,xcdraw]{xcolor}
\usepackage[normalem]{ulem}
\useunder{\uline}{\ul}{}
\usepackage{dsfont}

\theoremstyle{plain}
\newtheorem{theorem}{Theorem}[section]

\newtheorem{lemma}[theorem]{Lemma}

\theoremstyle{definition}

\theoremstyle{remark}

\title{Statistical Priors for Implicit Preferences: \\Decoupling Skill Selection as a Local Harness in Personal Agents}

\author{Zeyu~Gan, Huayi~Tang, Yong~Liu\thanks{Corresponding Author.} \\
        Gaoling School of Artificial Intelligence\\
        Renmin University of China\\
        Beijing, China \\
\texttt{\{zygan,huayitang,liuyonggsai\}@ruc.edu.cn}}

\begin{document}
\maketitle

\begin{abstract}
As Large Language Model (LLM) capabilities advance, locally deployed personal agents relying on API-based remote models and external skills have emerged as a novel paradigm. With the rapid expansion of available skills, enabling personal agents to learn and adapt to implicit user preferences becomes a critical challenge. However, local deployment constraints preclude complex centralized selection algorithms, creating an urgent need for a lightweight local preference harness. This paper explores the implementation of such a harness through a novel architecture that strictly decouples statistical preference learning from semantic intent parsing. Specifically, we leverage localized statistical results to influence and modulate the selection decisions of the remote LLM. Extensive evaluations demonstrate that our decoupled approach achieves the lowest cumulative regret and highest test accuracy, significantly outperforming traditional memory-augmented agents. We open-source our code at \url{https://github.com/ZyGan1999/Personalized-Skill-Selection}.

\end{abstract}

\section{Introduction}
\label{sec:introduction}


With the continuous advancement of Large Language Models (LLMs), the research paradigm has increasingly shifted from merely scaling model capabilities to transforming these powerful models into robust productivity tools. Against this backdrop, LLM-based agents have emerged as a prominent research frontier, giving rise to numerous commercial products capable of complex reasoning and task execution, such as Gemini's deep research~\citep{gemini-deep-research}, Manus~\citep{manus}, and Perplexity~\citep{perplexity}. Simultaneously, the growing demand for highly customizable, privacy-preserving, and deeply integrated digital assistants has catalyzed the rapid development of locally deployed personal agents (e.g., Claude Code~\citep{claude_code}, Codex~\citep{codex}, OpenClaw~\citep{openclaw}, HERMES Agent~\citep{hermes}, and Pi Agent~\citep{piagent}). These personal agents represent a novel interaction paradigm, intimately embedding themselves into users' daily workflows to provide highly individualized assistance. 

\begin{figure}[t]
    \centering
    \includegraphics[width=\linewidth]{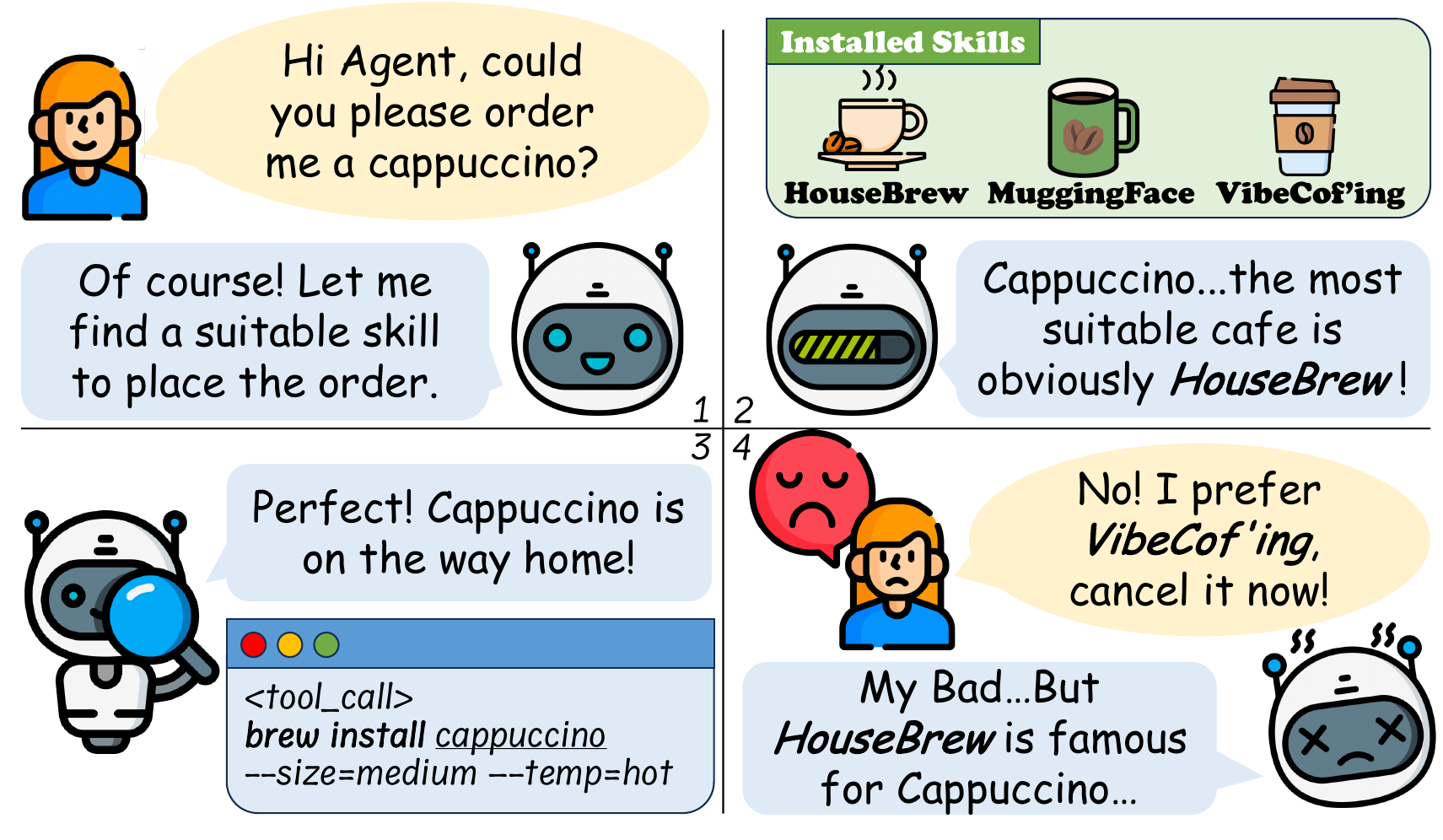}
    \caption{\textbf{A typical failure when personal agents are handling tasks with implicit user preference.} The agent selected a skill based on the user's instruction and its own knowledge, but ignored the user's implicit preferences, resulting in an unsatisfactory choice.}
    \label{fig:introduction}
    \vspace{-10pt}
\end{figure}

Distinct from traditional frameworks, personal agents heavily rely on API-based remote foundation models for complex semantic reasoning while operating within locally deployed execution environments. To extend their functional boundaries, a rapidly proliferating ecosystem of external skills and tools is being integrated into these local setups~\citep{qin2023toolllm,schick2023toolformer,patil2024gorilla}. As the number of available skills expands, dynamically selecting the most appropriate skill tailored to individual user preferences has emerged as a critical challenge~\citep{lin2025masstoolmultitasksearchbasedtool}. As illustrated in~\cref{fig:introduction}, when a user asks the agent to order a cappuccino, multiple functionally viable skills exist. The optimal choice is not determined by the explicit semantics of the user's query, but rather by their implicit, personal preference. Failing to accurately capture this preference inevitably leads to frustrating interactions. 

In real-world scenarios, user preferences for specific skills are rarely stated explicitly in every prompt; instead, they manifest through stochastic, high-frequency daily feedback and repetitive habits. For instance, a developer might consistently favor a specific linting skill for code inspection, or a user might habitually rely on a particular weather API. Consequently, the skill selection process in personal agents inherently encompasses two fundamentally different cognitive dimensions: \textbf{semantic understanding}, which parses explicit, ad-hoc intents from natural language instructions, and \textbf{statistical preference learning}, which identifies latent habits from historical interactions. 



However, effectively managing statistical preference learning poses a unique challenge for personal agents. Due to their localized deployment and strict privacy constraints, adopting computationally heavy or complex centralized recommendation algorithms is highly impractical, creating an urgent need for lightweight, on-device preference management. Currently, the prevailing solution relies on prompt-injected memory structures, forcing a single remote LLM to simultaneously handle both historical frequency tracking and semantic reasoning. This conflation leads to severe systemic failures. Beyond introducing high API latency and context window overflow, LLMs 
frequently get the logic lost in multi-turn conversations~\citep{laban2026llms}. Consequently, memory-based approaches fail to robustly capture fine-grained statistical priors and severely lack mathematical interpretability. 

To resolve this architectural bottleneck, we propose the \textsc{Local Harness}, a novel framework that enforces a strict physical and logical decoupling between statistical preference learning and semantic intent parsing. Specifically, we delegate the probabilistic credit-assignment problem of modeling implicit user habits to a locally deployed, highly efficient statistical primitive. This local module natively manages the exploration-exploitation tradeoff and serves as the primary decision-maker. Simultaneously, the high-latency remote LLM is entirely removed from the high-frequency execution critical path, strictly reserved as a semantic exception handler to process explicit lexical overrides. The main contributions of this paper are three-fold: 
(1) We identify the fundamental architectural flaw of conflating statistical preference learning with semantic reasoning in modern memory-augmented personal agents. 
(2) We propose the \textsc{Local Harness}, a lightweight, decoupled architecture that synergizes a local statistical estimator with a remote LLM exception handler to accurately model user preferences. 
(3) Since preference-driven skill selection is a newly identified problem with no existing simulation environment, we construct \textsc{ToolBench-60}, a dedicated sandbox, and conduct extensive empirical evaluations across diverse foundation models on it, demonstrating that our decoupled approach uniquely achieves both the lowest cumulative regret and highest test accuracy. 


The remainder of this paper is organized as follows. \Cref{sec:related-work} reviews related work on LLM agents and memory mechanisms. \Cref{sec:method} formalizes the preference-driven skill selection problem and details the \textsc{Local Harness} architecture. \Cref{sec:theoretical-analysis} presents theoretical analyses. \Cref{sec:experiment} discusses our extensive experimental setup and results. Finally, \cref{sec:conclusion} concludes the paper. 
\section{Related Work}
\label{sec:related-work}

\subsection{LLM Agents and Tool Use}
Empowering language models with external action execution has emerged as a cornerstone for building autonomous agentic systems~\citep{qin2023toolllm,yao2022react,zhou2026externalizationllmagentsunified,gan2026beyond}. Early methodologies established structured tool utilization by mapping textual intents to discrete API parameters~\citep{schick2023toolformer,patil2024gorilla}. To scale these capabilities, subsequent frameworks optimized execution paths across dense multi-tool environments and complex multi-turn planning loops~\citep{lin2025masstoolmultitasksearchbasedtool}. \citet{wang2026openclawrltrainagentsimply} further achieves interactive policy alignment by optimizing tool invocation through direct conversational feedback. While these tool-utilization paradigms excel at resolving explicit semantic tasks, they fundamentally formulate selection as an intent parsing problem, limiting the capability of capturing long-term user routines. 

\subsection{Harness Engineering}

The concept of designing specialized harnesses or orchestrators around LLMs has gained traction to mitigate the unreliability and high latency of raw model inference~\citep{harness-engineering}. A group of researches emphasize formalizing the programmatic architecture surrounding LLMs into systematic harness engineering rather than relying on ad-hoc scaffolding~\citep{li2026agentharness,zhou2026externalizationllmagentsunified}. Subsequent frameworks have advanced these architectures by introducing structured execution runtimes~\citep{huang2026affordanceagentharnessverificationgated}, flexible natural language interfaces~\citep{pan2026naturallanguageagentharnesses}, and automated evolution mechanisms~\citep{lin2026agenticharnessengineeringobservabilitydriven}. More recently, the community has also begun exploring end-to-end optimization of model harnesses~\citep{lee2026metaharnessendtoendoptimizationmodel}. 
While these orchestration frameworks excel at structuring explicit logic and execution infrastructure, their heavy engineering footprints make them impractical for lightweight, locally deployed personal agents. 

\subsection{Memory-Augmented LLMs}
Recent advancements emphasize treating memory as a first-class cognitive primitive rather than a simple retrieval mechanism~\citep{hu2026memoryageaiagents}. 
Foundational frameworks such as Generative Agents~\citep{park2023generative} pioneered experiential memory by simulating human-like episodic recording. 
Further refining methods construct dynamic memory for robust agent systems~\citep {zhong2024memorybank,shinn2024reflexion}. 
\citet{zhang2026memgen} further achieves memory management in a generative manner. More recently, the community has also begun exploring self-evolving memory systems~\citep{pan2026mstartaskdeservesmemory}. While these prompt-injected memory systems excel at retrieving explicit semantic facts, they frequently suffer from context overflow with dense logs, underscoring the necessity of our decoupled local harness to handle implicit statistical preferences. 

\section{Method}
\label{sec:method}

\begin{figure}[t]
    \centering
    \includegraphics[width=\linewidth]{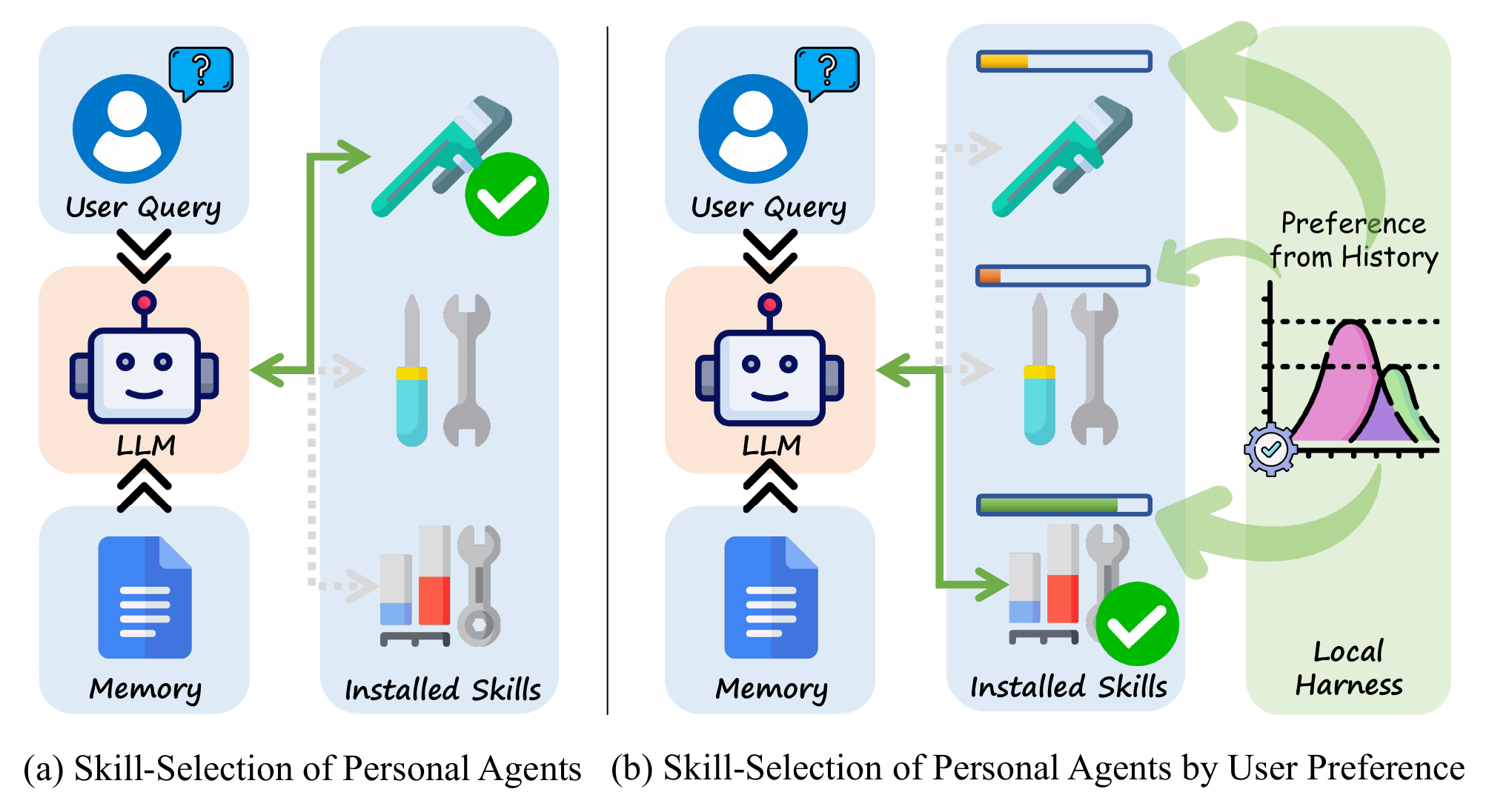}
    \caption{
    \textbf{Overview of the personalized skill selection problem and proposed architecture.} (a) A classic skill selection process based on user query.  (b) Our \textsc{Local Harness} architecture improves this by using a local statistical estimator for preference learning. 
    } 
    \label{fig:tool-selection-of-personal-agents}
    \vspace{-10pt}
\end{figure}

In this section, we formalize the personalized skill selection problem (\cref{ssec:benchmark-and-task-formulation}) and present the \textsc{Local Harness} architecture, detailing its statistical priors (\cref{ssec:local-harness}) and end-to-end decision procedure (\cref{ssec:tool-selection-via-statistical-priors}).

\subsection{Task Formulation}
\label{ssec:benchmark-and-task-formulation}

Standard skill-selection workflows typically treat the decision process as a direct semantic mapping from a natural language query to a specific skill, as illustrated in \cref{fig:tool-selection-of-personal-agents}(a). In this conventional paradigm, the text of the query is assumed to contain sufficient surface-level intent to uniquely identify the target skill. However, in real-world personal agent interactions driven by stochastic and highly repetitive daily habits, user queries are frequently skill-agnostic, rendering purely text-based intent parsing fundamentally insufficient. 

To address this, we formalize personalized skill selection as a sequential decision-making process under implicit user preference. At each interaction round $t$, a user $u$ issues a query $q_t$. A shared domain classifier first predicts the target domain $\hat{d}(q_t) \in \mathcal{D}$, which restricts the available choices to a candidate skill inventory $\mathcal{T}_d$ containing $K$ skills. Crucially, the ground-truth optimal skill $a_t^* \in \mathcal{T}_d$ is governed by the user's latent preference distribution $\pi_u: \mathcal{D} \rightarrow \Delta(\mathcal{T}_d)$. For a standard query, $a_t^*$ is sampled directly from $\pi_u(\hat{d})$. 
Upon selecting skill $a_t$, the system receives a binary reward $r_t = \mathds{1}\{a_t = a_t^*\}$ from user's subsequent feedback, which is considered an easily accessible reward signal~\citep{wang2026openclawrltrainagentsimply}. The primary objective is to maximize the cumulative expected reward $\sum_{t=1}^T r_t$ over a horizon of $T$ rounds by implicitly learning $\pi_u$ from historical feedback while maintaining robust adaptation to explicit semantic exceptions.

\subsection{Decouple Skill Selection as Local Harness}
\label{ssec:local-harness}

To resolve the fundamental conflict of forcing a single remote LLM to simultaneously manage statistical preference optimization and semantic parsing, we introduce the \textsc{Local Harness} architecture, illustrated in \cref{fig:tool-selection-of-personal-agents}(b). 
Concretely, a locally running, lightweight statistical component operates as the primary, default decision-maker. This component efficiently maintains the state space of the user's stochastic preferences locally and handles the exploration-exploitation trade-off mathematically, which we call a \textit{statistical prior}. 
By delegating historical credit assignment to the local harness, and reserving the remote LLM for lexical overrides, the architecture achieves a robust balance between statistical personalization efficiency and zero-shot reasoning capabilities. 
We introduce the following two statistical priors implemented in this paper.

\paragraph{Frequency Prior.} The simplest realization of the \textsc{Local Harness} is a per-user empirical success-rate table. For every triple $(u, d, k)$ comprising a user $u$, a domain $d$, and a candidate skill $k \in \mathcal{T}_d$, this component maintains two scalars after $t$ rounds: the number of attempts $N_{u,d,k}$ and the number of successes $S_{u,d,k}$ (the count of rounds in which skill $k$ was selected and received a positive reward). Its preference distribution over the inferred domain's candidate set $\mathcal{T}_{\hat{d}(q_t)}$ is the normalized success rate:
\begin{equation}
    \hat{p}^{\mathrm{freq}}_{u,d}(k) \;=\; \frac{S_{u,d,k} / \max(N_{u,d,k},\, 1)}{\sum_{k' \in \mathcal{T}_d} S_{u,d,k'} / \max(N_{u,d,k'},\, 1)},
\label{eq:freq_prior}
\end{equation}
and its default action is the corresponding $\arg\max$. 
The frequency prior makes no use of the query text and performs no explicit exploration. It therefore characterizes the elementary local statistical primitive, and serves as a controlled point of comparison for richer harnesses. 

\paragraph{Bandit Prior.}
A more principled realization casts each $(u, d, k)$ as a separate contextual-bandit arm. We instantiate this with \textsc{LinUCB}~\citep{li2010contextual}: every arm maintains a $D$-dimensional parameter $\boldsymbol{\theta}_{u,d,k}$ together with a regularized Gram matrix $\mathbf{A}_{u,d,k}\in\mathbb{R}^{D\times D}$ and a moment vector $\mathbf{b}_{u,d,k}\in\mathbb{R}^{D}$, updated online for each $(u,d,k)$ as: 
\begin{equation}
\begin{aligned}
&\mathbf{A} \leftarrow \mathbf{A} + \boldsymbol{\phi}(q,k,d)\boldsymbol{\phi}(q,k,d)^\top,
\\
&\mathbf{b} \leftarrow \mathbf{b} + r\,\boldsymbol{\phi}(q,k,d),
\\
&\boldsymbol{\theta} = \mathbf{A}^{-1}\mathbf{b}. 
\end{aligned}
\nonumber
\label{eq:linucb_update}
\end{equation}
Here $r=\{0,1\}$ represents the binary reward. 
The context vector $\boldsymbol{\phi}(q,k,d)\in\mathbb{R}^{D}$ is constructed by feature-hashing~\citep{weinberger2009feature} the query, the skill name, and the inferred domain, producing a deterministic, vocabulary-free representation that generalizes across queries with related surface forms. 
At decision time the harness computes an upper-confidence-bound (UCB) score: 
\begin{equation}
\begin{aligned}
&s_{u,d,k}(q) \;=\;\boldsymbol{\theta}_{u,d,k}^{\top}\boldsymbol{\phi}(q,k,d)\;
\\&+\;\alpha_{\mathrm{ucb}}\,\sqrt{\boldsymbol{\phi}(q,k,d)^\top \mathbf{A}_{u,d,k}^{-1} \boldsymbol{\phi}(q,k,d)}.
\end{aligned}
\label{eq:ucb_score}
\end{equation}
The default action is the corresponding $\arg\max_{k} s_{u,d,k}(q)$. Additionally, the UCB scores can be exposed as a tempered-softmax preference distribution over $\mathcal{T}_{\hat{d}(q_t)}$:
\begin{equation}
    \hat{p}^{\mathrm{bandit}}_{u,d}(k) \;=\; \frac{\exp\bigl(\beta\, s_{u,d,k}(q)\bigr)}{\sum_{k' \in \mathcal{T}_{\hat{d}}} \exp\bigl(\beta\, s_{u,d,k'}(q)\bigr)},
\end{equation}
where $\beta$ is the softmax temperature. 
The exploration coefficient $\alpha_{\mathrm{ucb}}$ controls the balance between exploiting the current point estimate and probing under-sampled arms. Compared with the frequency prior, the bandit prior is sensitive to query context (through $\boldsymbol{\phi}$), and, more importantly, maintains \emph{calibrated uncertainty}: the second term of $s_{u,d,k}(q)$ grows with epistemic uncertainty and shrinks as data accumulate. Both properties become consequential when the user's underlying preference distribution $\pi_u(d)$ is stochastic. 

We emphasize that neither prior is itself a contribution: both are well-established estimators, and alternatives such as Thompson sampling or kernel bandits are equally compatible with the architecture. Our claim is that \emph{any} consistent local harness, when paired with the override channel described next, suffices for personalized skill selection; comparing two instantiations characterizes how the strength of the statistical primitive interacts with the rest of the system. 


\subsection{Skill Selection via Statistical Priors}
\label{ssec:tool-selection-via-statistical-priors}
We now describe the end-to-end decision procedure that the \textsc{Local Harness} architecture induces, given a chosen statistical prior $h\in\{\mathrm{freq},\mathrm{bandit}\}$. The procedure separates each round of personalized skill selection into the following three steps. 

\paragraph{Step~1: Shared domain classification.}
Upon receiving a query $q_t$, the system performs a single LLM call to infer the target domain $\hat d(q_t)\in\mathcal{D}$. 
The resulting label $\hat d(q_t)$ restricts the candidate set to $\mathcal{T}_{\hat d(q_t)}\subset\mathcal{T}$ for all downstream operations.

\paragraph{Step~2: Local statistical default.}
The local harness consumes the inferred domain together with the user's state and returns a candidate action: 
\begin{equation}
\tilde a_t \;=\; \arg\max_{k\in\mathcal{T}_{\hat d(q_t)}} \hat p^{\,h}_{u,\hat d(q_t)}(k), 
\nonumber
\label{eq:harness_default}
\end{equation}
where $\hat p^{\,h}$ is either the success-rate distribution of Equation~\eqref{eq:freq_prior} or the bandit-derived distribution implied by Equation~\eqref{eq:ucb_score}. This step is fully local, deterministic, and conditional on the user state. 

\paragraph{Step~3: Semantic override probe.}
The remote LLM is engaged only to check whether the user's query contains an explicit lexical instruction that supersedes the user's habitual preference. We issue a single binary probe (details in \cref{sec:query-templates}): 
\begin{equation}
o_t,\,\tau_t \;=\; \mathrm{LLM}_{\text{override}}\!\left(q_t,\, \mathcal{T}_{\hat d(q_t)}\right), 
\nonumber
\label{eq:override_probe}
\end{equation}
where $o_t\in\{0,1\}$ indicates whether the query explicitly names a specific skill and $\tau_t\in\mathcal{T}_{\hat d(q_t)}$ is the named skill when $o_t=1$. 
\paragraph{Illustrative example.} Consider the cappuccino scenario in~\cref{fig:introduction}. The user, who habitually prefers \textsc{VibeCof'ing}, issues the query ``\textit{Order me a cappuccino}''. In Step~1, the shared domain classifier maps the query to the \texttt{eCommerce} domain, restricting the candidate set to the six coffee-ordering skills. In Step~2, the local harness consults its accumulated statistics for this user and returns $\tilde{a}_t = \textsc{VibeCof'ing}$ as the default action (since the user has historically rewarded this skill most often in this domain). In Step~3, the override probe inspects the query text: it contains no explicit skill name, so $o_t = 0$. The final action $a_t = \tilde{a}_t = \textsc{VibeCof'ing}$ is executed. Had the user instead said ``\textit{Order me a cappuccino \underline{from HouseBrew}}'', Step~3 would have returned $o_t = 1,\, \tau_t = \textsc{HouseBrew}$, and the override would have superseded the habitual default. 

\paragraph{Action and update.}
The action executed at round $t$ is then
\begin{equation}
a_t \;=\;
\begin{cases}
\tau_t & \text{if } o_t = 1,\\[2pt]
\tilde a_t & \text{otherwise.}
\end{cases}
\nonumber
\label{eq:final_action}
\end{equation}
The environment returns the binary reward $r_t=\mathds{1}\{a_t=a^*_t\}$, after which the local harness alone is updated with the tuple $(q_t,\hat d(q_t),a_t,r_t)$. 

\section{Theoretical Analysis}
\label{sec:theoretical-analysis}

We now show that, under mild assumptions automatically satisfied by the priors of~\cref{sec:method}, the \textsc{Local Harness} architecture provably brings the agent's expected per-round regret strictly closer to the user-optimal level. 

\paragraph{Setup.}
Fix a user $u$. At round $t$, the shared domain classifier reduces the candidate set to $\mathcal{T}_{\hat{d}}$ of size $K$. Write $Q(\cdot\mid q) := \pi_u(\hat{d}(q)) \in \Delta(\mathcal{T}_{\hat{d}})$ for the user's latent preference distribution; the ground-truth optimal skill on a standard query satisfies $a^{\star} \sim Q(\cdot\mid q)$. Any agent induces a (stochastic) policy with conditional distribution $\tilde{P}(\cdot\mid q) \in \Delta(\mathcal{T}_{\hat{d}})$, and the binary loss $\ell(a,a^{\star}) = \mathds{1}\{a \neq a^{\star}\}$ gives the expected per-round regret: 
\begin{equation}
\label{eq:risk-def}
R(\tilde{P}) \;:=\; 1 - \mathbb{E}_q
\big\langle \tilde{P}(\cdot\mid q),\, Q(\cdot\mid q)\big\rangle.
\end{equation}
We distinguish three distributions in $\Delta(\mathcal{T}_{\hat{d}})$: $P$ denotes the LLM's selection distribution; 
$h_t$ denotes the distribution induced by the local statistical prior $\hat{p}_{u,\hat{d}}^{h}$ of Section~\ref{sec:method} after $t$ rounds; and $P'_t$ denotes the policy of \textsc{Local Harness}. The override mechanism makes $P'_t$ a convex combination: 
\begin{equation}
\label{eq:mixture}
P'_t \;=\; (1-\lambda)\, h_t \;+\; \lambda\, P,
\qquad \lambda \in [0,1],
\end{equation}
with $\lambda = \rho_e$ for \textsc{Bandit-as-Override} and \textsc{Freq-as-Override}, since standard queries (fraction $1-\rho_e$) are routed through $h_t$ and explicit queries (fraction $\rho_e$) are routed through $P$. Note that $R(\tilde{P})$ is linear in $\tilde{P}$. 
We then present the main result as follows. 

\begin{theorem}[Risk improvement under \textsc{Local Harness}]
\label{thm:main}
Assume (i) the local statistical prior is consistent, i.e.
$\mathbb{E}[\mathrm{TV}(h_t,Q)] \to 0$ as $t \to \infty$, and
(ii) the personalization problem is non-trivial, i.e.
$|R(P) - R(Q)| > 0$. Then there exists $T_0 < \infty$ such that for
every $t \ge T_0$,
\begin{equation}
\label{eq:main}
\mathbb{E}\big|R(P'_t) - R(Q)\big| \;<\; \big|R(P) - R(Q)\big|.
\end{equation}
\end{theorem}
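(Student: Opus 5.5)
The plan is to exploit the linearity of $R$ in its argument, noted right after \eqref{eq:mixture}. Substituting the mixture \eqref{eq:mixture} into \eqref{eq:risk-def} gives, pathwise,
\begin{equation*}
R(P'_t) - R(Q) \;=\; (1-\lambda)\bigl(R(h_t)-R(Q)\bigr) + \lambda\bigl(R(P)-R(Q)\bigr).
\end{equation*}
The triangle inequality then yields
\begin{equation*}
\bigl|R(P'_t)-R(Q)\bigr| \;\le\; (1-\lambda)\bigl|R(h_t)-R(Q)\bigr| + \lambda\bigl|R(P)-R(Q)\bigr|,
\end{equation*}
and taking expectations over the randomness of the history reduces the claim to controlling $\mathbb{E}|R(h_t)-R(Q)|$.

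Next I bound $|R(h_t)-R(Q)|$ by total variation. For any two distributions $\mu,\nu$ on $\mathcal{T}_{\hat d}$,
\begin{equation*}
|R(\mu)-R(\nu)| = \bigl|\mathbb{E}_q\langle \mu(\cdot\mid q)-\nu(\cdot\mid q),\, Q(\cdot\mid q)\rangle\bigr|.
\end{equation*}
Since $Q$ has entries in $[0,1]$, each inner product is at most $\sum_k(\mu-\nu)_+ = \mathrm{TV}(\mu,\nu)$ in absolute value. This holds because $\sum_k(\mu-\nu)=0$, so pairing with a vector in $[0,1]^K$ is bounded by the positive part. Hence $\mathbb{E}|R(h_t)-R(Q)|\le \mathbb{E}[\mathrm{TV}(h_t,Q)]$, understood conditionally on $q$ and averaged, with $h_t$ evaluated per query as in assumption (i). By (i), this tends to $0$.

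Combining the two steps, write $\Delta := |R(P)-R(Q)|>0$ by (ii) and $\varepsilon_t := \mathbb{E}[\mathrm{TV}(h_t,Q)]$. Then
\begin{equation*}
\mathbb{E}|R(P'_t)-R(Q)| \;\le\; (1-\lambda)\,\varepsilon_t + \lambda\,\Delta .
\end{equation*}
This is strictly less than $\Delta$ iff $(1-\lambda)\varepsilon_t<(1-\lambda)\Delta$. For $\lambda<1$, I choose $T_0$ such that $\varepsilon_t<\Delta$ for all $t\ge T_0$, which is possible since $\varepsilon_t\to0$ and $\Delta>0$.

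The main obstacle is the degenerate case $\lambda=1$ (all queries explicit, $\rho_e=1$), where $P'_t=P$ and the strict inequality fails. I would handle it by noting that the theorem implicitly requires $\lambda=\rho_e<1$, i.e.\ a positive fraction of standard queries. I would state this explicitly as part of the setup rather than derive it. A secondary technical point is making the "$\mathrm{TV}(h_t,Q)$" in (i) mean the query-averaged conditional total variation, so the bound in the second step is legitimate.
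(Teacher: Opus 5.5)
Your proof is correct and follows the paper's route: linearity of $R$ on the mixture $P'_t=(1-\lambda)h_t+\lambda P$, the triangle inequality, a regret--TV transfer bound, and consistency of the prior to choose $T_0$. Two small differences are worth noting: your zero-sum argument gives the sharper constant $1$ where the paper's H\"older step gives $2$, so you only need $\varepsilon_t<\Delta$ rather than $2\varepsilon_t<\Delta$; and your $\lambda<1$ caveat is genuinely needed, because at $\lambda=1$ the paper's own final strict inequality $(1-\lambda)\,2\varepsilon_t<(1-\lambda)\Delta$ becomes $0<0$.
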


\paragraph{Discussion.}
\Cref{thm:main} 
precisely characterizes the override architecture but is \emph{not} available to memory-augmented agents that funnel both statistical and semantic signals through a single LLM call. 
A quantitative refinement,
\begin{equation}
\label{eq:quant}
\begin{aligned}
\mathbb{E}\big|R(P'_t) - R(Q)\big|\;&\le\; \lambda\,|R(P) - R(Q)| \\ &+ 2(1-\lambda)\,C_{\text{prior}}\,t^{-1/2},
\end{aligned}
\end{equation}
exposes a two-phase convergence: an early phase dominated by the $O(t^{-1/2})$ exploration term, and a long-horizon plateau at $\lambda|R(P) - R(Q)|$, i.e.\ a strict $(1-\lambda)\times$ reduction over the LLM-only baseline. The full proof is deferred to Appendix~\ref{appendix:proofs}. 
\section{Experiment}
\label{sec:experiment}

We conduct extensive empirical evaluations to validate the effectiveness of the proposed \textsc{Local Harness} architecture across diverse language model backbones and preference regimes. 

\subsection{Experimental Setup}
\label{ssec:experimental-setup}
This subsection outlines the experimental setup, detailing the sandbox construction, evaluation metrics, model configurations, and the nine evaluated agents grouped by four different design families. We kindly refer the readers to the detailed implementations in~\cref{app:experimental-setup-details}.

\subsubsection{Sandbox}
Since preference-driven skill selection under implicit user preference is a newly identified problem, no existing benchmark provides synthetic users with controllable preference distributions over a realistic skill inventory. 
To evaluate personalized skill selection, we construct a simulation environment, \textsc{ToolBench-60}, comprising 60 skills across 10 domains curated from \textsc{ToolBench}~\citep{qin2023toolllm}. We model synthetic users by assigning each a latent preference distribution $\pi_{u}$ (ranging from deterministic one-hot to stochastic Dirichlet) to govern their habitual choices. These users are evaluated against a mixed query pool: \textit{standard queries} that omit specific skill names to mandate preference recovery, and \textit{explicit queries} that directly name a skill to test zero-shot semantic override capabilities. Full configuration details are provided in \cref{ssec:benchmark}. 

\subsubsection{Evaluation Protocol and Backbones}
\paragraph{Evaluation Metrics.} 
We systematically evaluate the agents' performance and preference alignment using four principal metrics:
(1) \textbf{Cumulative Regret (Regret):} The standard online learning metric, representing the cumulative cost of suboptimal skill selections over the $T$ interaction rounds.
(2) \textbf{Test Accuracy (Acc.):} The accuracy evaluated on the held-out test pool at the end of the interaction horizon, measuring the agent's ability to generalize learned preferences to unseen standard and explicit queries without further state updates.
(3) \textbf{Recovery Rate (R.R.):} Exclusively reported in the \emph{One-hot} preference regime, this metric computes the top-1 hit rate, indicating whether the argmax of the agent's learned probability distribution $\hat{p}_{u,d}$ correctly matches the ground-truth preferred skill $a^*_{u,d}$ across all user-domain pairs in all domains $\mathcal{D}$:
\begin{equation}
    \text{R.R.} = \frac{1}{N_u \cdot |\mathcal{D}|}\sum_{u,d} \mathds{1}\Big[\arg\max_k \hat{p}_{u,d}(k) = a^*_{u,d}\Big], 
    \nonumber
\end{equation}
where $a^*_{u,d} := \arg\max_{k \in \mathcal{T}_d} \pi_u(d)(k)$ denotes user $u$'s preferred skill in domain $d$. 
(4) \textbf{Spearman Rank Correlation (SRC):} Exclusively reported in the \emph{Soft} preference regime, this metric assesses the holistic alignment between the predicted and actual preference distributions by measuring their rank correlation:
\begin{equation}
    \text{SRC} = \frac{1}{N_u \cdot |\mathcal{D}|}\sum_{u,d} \rho\big(\text{rank}(\hat{p}_{u,d}),\ \text{rank}(p^*_{u,d})\big), 
    \nonumber
\end{equation}
where $\rho$ is the Spearman correlation coefficient. 

\paragraph{Backbones.} 
To ensure architectural robustness across varying model capabilities, we evaluate all agents using 3 diverse LLM backbones: GPT-5.2~\citep{gpt52}, DeepSeek-V4-Flash~\citep{deepseekai2026deepseekv4}, and Qwen3-30B-Instruct~\citep{yang2025qwen3technicalreport}. The experimental setup involves $N_u=50$ users interacting over $T=500$ rounds, with results averaged across $S=3$ independent seeds. 
For details about the hyperparameter configurations and API settings, please refer to \cref{ssec:hparams} and \cref{ssec:hardware}, respectively. 

\subsubsection{Agents Under Evaluation}


We instantiate \emph{nine} agents grouped into four design families to isolate the contribution of each architectural component. Agents within a family share a common decision primitive and differ only in incidental implementation choices; agents across families are intended to exhibit qualitatively different trade-offs. 
Detailed implementation for these agents is provided in \cref{ssec:agents}. 

\paragraph{Family I: No learning.}
\textbf{(1) \textsc{Random}} selects a skill uniformly at random from the full skill inventory. 
\textbf{(2) \textsc{ZeroShot-LLM}} prompts an LLM to select a skill using only the user's query and the skill descriptions, without any historical user state. 

\paragraph{Family II: Statistical only.}
These agents do not invoke the LLM; they decide purely from statistical priors. 
\textbf{(1) \textsc{Freq-Greedy}} greedily selects the skill with the \emph{frequency prior}. 
\textbf{(2) \textsc{Pure-Bandit}} uses the \emph{bandit prior} to calculate and directly select the skill with the highest UCB score.

\paragraph{Family III: LLM with Memory.}
These agents use the LLM as the sole decision-maker, differing only in the amount of per-user state exposed to the prompt. 
\textbf{(1) \textsc{InContext-Memory}} prompts an LLM to select a skill by appending the user's five most recent successful skill selections to the standard prompt. 
\textbf{(2) \textsc{Profile-Memory}} prompts an LLM to select a skill by injecting a complete, serialized log of the user's past successes and attempts for every skill directly into the context. 

\begin{table*}[tp]
\centering
\resizebox{\linewidth}{!}{
\begin{tabular}{cccccccc}
\hline
\multicolumn{2}{c}{}                                                                                                                      & \multicolumn{3}{c}{\textbf{one-hot}}                                                                                                               & \multicolumn{3}{c}{\textbf{soft-0.3}}                                                                                                                             \\ \cline{3-8} 
\multicolumn{2}{c}{\multirow{-2}{*}{\textit{\textbf{Qwen3-30B-Instruct}}}}                                                                & Regret($\downarrow$)                      & Acc.(\%, $\uparrow$)                                    & R.R.(\%, $\uparrow$)                         & Regret($\downarrow$)                                     & Acc.(\%, $\uparrow$)                                    & SRC($\uparrow$)                              \\ \hline
                                                                                                 & Random                                 & \cellcolor[HTML]{EFEFEF}492.1 ($\pm$ 0.2) & \cellcolor[HTML]{EFEFEF}1.7 ($\pm$ 0.1)                 & \cellcolor[HTML]{EFEFEF}15.9                 & \cellcolor[HTML]{EFEFEF}491.2 ($\pm$ 0.8)                & \cellcolor[HTML]{EFEFEF}1.7 ($\pm$ 0.3)                 & \cellcolor[HTML]{EFEFEF}0.000                \\
\multirow{-2}{*}{\textbf{No Learning}}                                                           & ZeroShot-LLM                           & 377.2 ($\pm$ 3.0)                         & 23.9 ($\pm$ 0.2)                                        & 15.9                                         & 377.5 ($\pm$ 3.6)                                        & 24.3 ($\pm$ 0.6)                                        & 0.000                                        \\ \hline
                                                                                                 & Freq-Greedy                            & \cellcolor[HTML]{EFEFEF}167.6 ($\pm$ 6.0) & \cellcolor[HTML]{EFEFEF}72.1 ($\pm$ 1.1)                & \cellcolor[HTML]{EFEFEF}86.4                 & \cellcolor[HTML]{EFEFEF}328.5 ($\pm$ 1.2)                & \cellcolor[HTML]{EFEFEF}33.3 ($\pm$ 1.0)                & \cellcolor[HTML]{EFEFEF}0.399                \\
\multirow{-2}{*}{\textbf{Statistical}}                                                           & Pure-Bandit                            & 140.2 ($\pm$ 1.9)                         & 80.4 ($\pm$ 0.6)                                        & 99.9                                         & 282.0 ($\pm$ 2.2)                                        & 39.5 ($\pm$ 1.7)                                        & {\ul \textbf{0.539}}                         \\ \hline
                                                                                                 & InCotext-Memory                        & \cellcolor[HTML]{EFEFEF}363.9 ($\pm$ 3.8) & \cellcolor[HTML]{EFEFEF}27.2 ($\pm$ 0.2)                & \cellcolor[HTML]{EFEFEF}62.5                 & \cellcolor[HTML]{EFEFEF}372.7 ($\pm$ 3.7)                & \cellcolor[HTML]{EFEFEF}25.4 ($\pm$ 0.7)                & \cellcolor[HTML]{EFEFEF}0.282                \\
\multirow{-2}{*}{\textbf{\begin{tabular}[c]{@{}c@{}}LLM with \\ Memory\end{tabular}}}            & Profile-Memory                         & 269.5 ($\pm$ 4.5)                         & 53.4 ($\pm$ 0.6)                                        & 70.9                                         & 344.2 ($\pm$ 4.3)                                        & 32.9 ($\pm$ 0.7)                                        & 0.271                                        \\ \hline
                                                                                                 & Bandit-as-Context                      & \cellcolor[HTML]{EFEFEF}344.2 ($\pm$ 1.5) & \cellcolor[HTML]{EFEFEF}34.1 ($\pm$ 0.3)                & \cellcolor[HTML]{EFEFEF}82.7                 & \cellcolor[HTML]{EFEFEF}369.6 ($\pm$ 2.9)                & \cellcolor[HTML]{EFEFEF}26.4 ($\pm$ 0.6)                & \cellcolor[HTML]{EFEFEF}0.373                \\
                                                                                                 & Freq-as-Override                       & {\ul \textbf{126.3 ($\pm$ 4.5)}}          & 82.5 ($\pm$ 1.2)                                        & 92.5                                         & 295.3 ($\pm$ 2.9)                                        & 41.6 ($\pm$ 0.6)                                        & 0.288                                        \\
\multirow{-3}{*}{\textbf{\begin{tabular}[c]{@{}c@{}}LLM with \\ Statistical Prior\end{tabular}}} & \multicolumn{1}{l}{Bandit-as-Override} & \cellcolor[HTML]{EFEFEF}135.7 ($\pm$ 0.7) & \cellcolor[HTML]{EFEFEF}{\ul \textbf{84.3 ($\pm$ 0.7)}} & \cellcolor[HTML]{EFEFEF}{\ul \textbf{100.0}} & \cellcolor[HTML]{EFEFEF}{\ul \textbf{264.8 ($\pm$ 2.4)}} & \cellcolor[HTML]{EFEFEF}{\ul \textbf{46.2 ($\pm$ 0.6)}} & \cellcolor[HTML]{EFEFEF}{\ul \textbf{0.539}} \\ \hline
\end{tabular}
}
\caption{\textbf{Aggregate performance of nine skill-selection agents evaluated across varying user preference regimes on Qwen3-30B-Instruct.} Performance is measured by Cumulative Regret (Regret, $\downarrow$), Test Accuracy on the held-out pool (Acc., $\uparrow$), Recovery Rate (R.R., $\uparrow$), and Spearman Rank Correlation (SRC, $\uparrow$). 
}

\label{tab:main-results}
\end{table*}

\paragraph{Family IV: LLM with Statistical Prior.}
These agents combine a learned statistical estimator with an LLM to balance historical priors with semantic reasoning. 
\textbf{(1) \textsc{Bandit-as-Context}} prompts an LLM to make the final selection by providing the user's query alongside the probability distribution outputted by the \emph{bandit prior}. 
\textbf{(2) \textsc{Freq-as-Override}} operates as the procedure described in~\cref{ssec:tool-selection-via-statistical-priors} with \emph{frequency prior}. 
\textbf{(3) \textsc{Bandit-as-Override}} operates as the procedure described in~\cref{ssec:tool-selection-via-statistical-priors} with \emph{bandit prior}. 

\subsection{Main Results}
\label{ssec:main-results}


\Cref{tab:main-results} presents the aggregate performance on Qwen3-30B-Instruct; results on DeepSeek-V4-Flash and GPT-5.2 appear in~\cref{sec:more-exp-main-results}. We synthesize three primary findings: 

\paragraph{Neither Semantics Nor Statistics Alone Suffices.} The \textsc{No Learning} family consistently exhibits the highest cumulative regret, confirming that zero-shot prompting cannot deduce latent habits without historical signals. Conversely, purely statistical agents (\textsc{Freq-Greedy}, \textsc{Pure-Bandit}) attain competitive Regret and R.R. but are bottlenecked on Accuracy by their inability to process explicit overrides. Both dimensions are indispensable.

\paragraph{Exploration Outperforms Frequency Counting.} Bandit-based primitives systematically outperform frequency counting in R.R. and SRC, with the gap widening under the \textit{Soft} regime that better reflects stochastic real-world behaviors. Principled exploration is necessary for unearthing nuanced preferences within limited interaction rounds.

\paragraph{Decoupling Beats Prompt-Injected Memory.} The \textsc{LLM With Statistical Prior} family consistently achieves the lowest Regret and highest Test Accuracy across all backbones, while memory-augmented baselines (\textsc{InContext-Memory}, \textsc{Profile-Memory}), the prevailing paradigm in commercial agents, incur significantly higher regret. The difference is not in \textit{what} is remembered but in \textit{what makes the decision}. 

\begin{figure}[t]
    \centering
    \includegraphics[width=\linewidth]{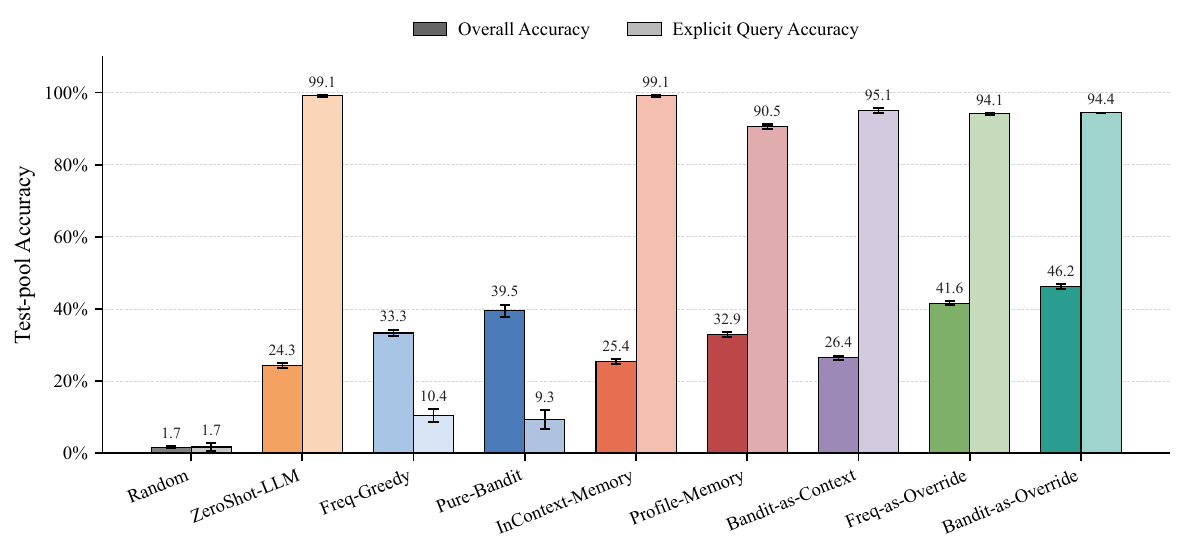}
    \caption{
    \textbf{Performance breakdown on explicit queries using the Qwen3-30B-Instruct backbone (Soft-0.3 regime).} Unlike purely statistical agents that fail on zero-shot instructions, our \textsc{Local Harness} achieves near-perfect execution by reserving the LLM exclusively as a semantic exception handler. 
    }
    \label{fig:test-pool-acc}
    \vspace{-10pt}
\end{figure}

\begin{figure*}[t]
    \centering
    \includegraphics[width=\linewidth]{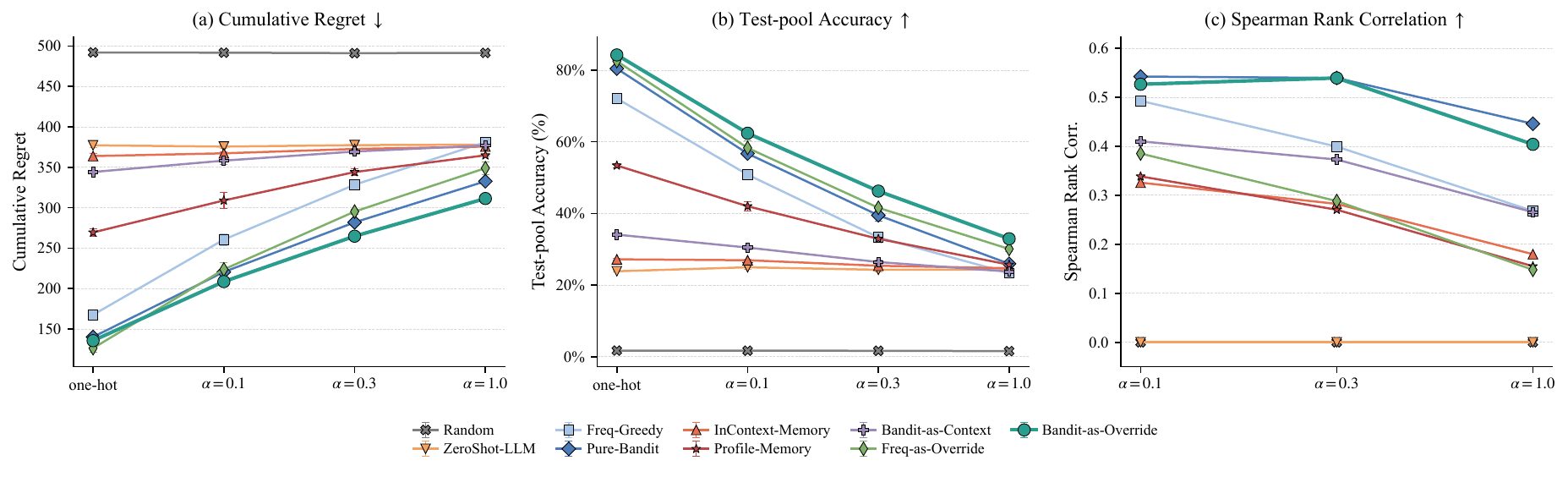}
    \caption{\textbf{Performance of evaluated agents across varying levels of user preference evenness ($\alpha$) using the Qwen3-30B-Instruct backbone.} The Dirichlet concentration parameter $\alpha$ sweeps from a deterministic \textit{one-hot} preference to a uniform distribution at $\alpha=1.0$. The subfigures report (a) Cumulative Regret ($\downarrow$), (b) Test-pool Accuracy ($\uparrow$), and (c) Spearman Rank Correlation ($\uparrow$). Across all evenness levels, \textsc{Bandit-as-Override} uniquely and consistently attains the lowest regret and highest test-pool accuracy. 
    }
    \label{fig:evenness-of-user-preference}
    \vspace{-10pt}
\end{figure*}

\subsection{Fine-Grained Evaluation}
\label{ssec:fine-grained-evaluation}

Beyond the following results, we also present analysis of cumulative regret in~\cref{sec:more-exp-regret}, rolling accuracy in~\cref{sec:more-exp-rolling-acc}, per-domain accuracy in~\cref{sec:more-exp-per-domain-acc}, and preference recovery in~\cref{sec:more-exp-soft-pref-recovery}. 

\subsubsection{Detailed Accuracy Analysis}

To provide a detailed accuracy analysis, we present a performance breakdown on explicit queries evaluated on Qwen3-30B-Instruct in \cref{fig:test-pool-acc}. 
We also present the results on DeepSeek-V4-Flash and GPT-5.2 via \cref{fig:test-pool-acc-dpsk} and \cref{fig:test-pool-acc-gpt52} in \cref{sec:more-exp-test-pool-acc}, respectively. 
These results highlight a critical vulnerability in purely statistical agents (\textsc{Freq-Greedy}, \textsc{Pure-Bandit}): while adept at modeling latent habits, their accuracy drops precipitously on explicit queries because they cannot process zero-shot semantic instructions. Furthermore, memory-augmented LLMs struggle to balance dense statistical tracking with these overrides. Our \textsc{Local Harness} architecture elegantly resolves this by delegating historical priors to local and reserving the LLM strictly as a semantic exception handler, achieving near-perfect execution on explicit instructions and validating our decoupled design.

\subsubsection{Evenness of User Preference}
\label{ssec:evenness}

To evaluate the impact of user preference evenness, we sweep the Dirichlet concentration parameter $\alpha \in \{\text{one-hot}, 0.1, 0.3, 1.0\}$. Our analysis yields three primary findings. First, \textsc{Bandit-as-Override} consistently maintains the lowest cumulative regret and highest test accuracy across all preference distributions. Second, the performance advantage of \textsc{Bandit-as-Override} over the exploration-free \textsc{Freq-as-Override} widens monotonically as preferences transition from deterministic to highly stochastic (increasing $\alpha$). This underscores that principled exploration is critical for mitigating the underestimation bias inherent in greedy frequency counting under stochastic regimes. Finally, the learned posterior of \textsc{Bandit-as-Override} closely tracks that of \textsc{Pure-Bandit}, verifying that our decoupled LLM semantic override channel effectively preserves the integrity of the statistical learning signal. 

\subsection{Correspondence with the Theory}
\label{ssec:theory-correspondence}
\Cref{thm:main} makes three qualitative predictions that align with our empirical findings. \textbf{(i) Mixture decomposition is necessary, not incidental.} The proof relies on the explicit convex-combination form $P'_t = (1-\lambda)h_t + \lambda P$ both for the convexity argument of~\cref{lem:mixing} and for the linearity argument that combines the two lemmas. Agents that route the statistical signal and the semantic signal through a single LLM call (\textsc{Bandit-as-Context}, \textsc{Profile-Memory}, \textsc{InContext-Memory}) do not admit such a decomposition, and the bound does not cover them, which is consistent with their markedly higher regret across all three backbones. 
\textbf{(ii) Backbone-invariance of the ranking.} Because~\cref{thm:main} depends on the LLM only through the constant $|R(P) - R(Q)|$ and not through any model-specific quantity, it predicts that the relative ordering of methods should be preserved across backbones, while absolute regret levels may shift. This is precisely what we observe across varying backbones in the main results. 
\textbf{(iii) Sensitivity to preference evenness.}
The frequency prior is purely exploitative and therefore more sensitive than bandit to the evenness of $Q$: as the preference distribution flattens, distinguishing the optimal skill from its near-ties requires increasingly fine-grained estimates, inflating $C_{\text{prior}}$. 
The bound therefore predicts a widening gap between \textsc{Bandit-as-Override} and \textsc{Freq-as-Override} as $\alpha$ increases. \Cref{fig:evenness-of-user-preference} confirms this: $\Delta_{\text{regret}}$ grows monotonically from $9.4$ at one-hot to $38$ at $\alpha\!=\!1.0$. 
The theory thus accounts for both the architectural ranking and its directional sensitivity to preference structure. 
\section{Conclusion}
\label{sec:conclusion}
We introduced \textsc{Local Harness}, an architecture for preference-driven skill selection in locally deployed personal agents that strictly decouples statistical preference learning from semantic intent parsing. 
By delegating user habit modeling to a lightweight local estimator and reserving the remote LLM as a semantic exception handler, our framework synergizes statistical personalization with semantic reasoning, establishing a robust paradigm for personalized agent designs.

\section*{Limitations}
While the \textsc{Local Harness} architecture successfully demonstrates the advantages of decoupling statistical preference learning from semantic intent parsing, our study has several limitations. 
First, since preference-driven skill selection is a newly identified problem with no existing simulation environment, we constructed \textsc{ToolBench-60} as a first dedicated sandbox to enable principled evaluation. While this benchmark already captures the core dynamics of implicit preferences across diverse domains and preference regimes, it currently models stationary user profiles; extending it to capture non-stationary temporal shifts and richer contextual dependencies, as well as broadening to additional benchmarks, is a natural next step that our open-sourced framework readily supports. 
Second, the current sequential decision-making formulation assumes immediate, explicit binary reward signals, though practical user feedback is frequently sparse, delayed, or noisy. Third, to ensure the local harness remains lightweight and free of external model dependencies, we employ deterministic feature hashing, which may lack the representational capacity of dense neural text embeddings to capture highly nuanced syntactic variations. Finally, while we removed the remote LLM from the high-frequency execution critical path, the framework still relies on capable remote foundation models to process the semantic override probe; evaluating its viability when paired with smaller on-device models remains an important area for future exploration. 

\bibliography{reference}

\appendix

\section{Experimental Setup Details}
\label{app:experimental-setup-details}
\subsection{Sandbox Construction}
\label{ssec:benchmark}

\paragraph{Skill Inventory.}
To ground the evaluation in realistic ecosystems, we derive our skill inventory from \textsc{ToolBench}~\cite{qin2023toolllm}, treating a tool as a simulation of a skill. 
From this corpus we curate a balanced subset of $60$ skills spanning $10$ domains (\textit{Finance}, \textit{Sports}, \textit{Travel}, \textit{Entertainment}, \textit{Gaming}, \textit{Education}, \textit{Communication}, \textit{Location}, \textit{eCommerce}, \textit{Social}), with $K=6$ skills per domain. We name this sandbox \textsc{ToolBench-60} in the later contents. Each skill is paired with a short natural-language description, 
exposed to language-model agents as part of the candidate description. 

\paragraph{Query Templates.} 
For each domain, we use an LLM ($\mathrm{GPT}$-$5.2$~\citep{gpt52}) to author two complementary template banks:
\textbf{(1) Standard templates}: 
natural-language user queries that describe an information need \emph{without} naming any specific skill. The correct skill in a domain for such a query is determined entirely by the user's latent preference. Standard templates are shared across users in the same domain; the correct answer differs across users. 
\textbf{(2) Explicit templates}: 
queries that \emph{explicitly} name a specific skill by quoting it inside the request (e.g., ``Use \textit{Twelve Data} to pull the last five years of daily prices for EUR/USD\ldots'').  An explicit query establishes a ground-truth target that supersedes the user's habitual choice. These queries enable us to measure whether an agent can break out of a learned habit when the user demands a non-default behavior. 
A detailed query generation process is presented in \cref{sec:query-templates}. 

\paragraph{Synthetic Users}
A user persona is defined by a preference function $\pi_u: \mathcal{D} \to \Delta(\mathcal{T}_d)$ that maps each domain to a distribution over its $K$ skills. We instantiate two preference regimes:
\textbf{(1) One-hot preference.} $\pi_u(d)$ places unit mass on a single preferred skill, sampled uniformly. This models a user with strong, deterministic habits and is the simpler benchmark setting. 
\textbf{(2) Soft preference} (Dirichlet). $\pi_u(d) \sim \mathrm{Dir}(\alpha\mathbf{1}_K)$ with concentration $\alpha$. Smaller $\alpha$ yields more peaked distributions (approaching one-hot), larger $\alpha$ yields flatter distributions. This models a user whose preference is graded rather than absolute. 
We use $\alpha = 0.3$ throughout the main experiments. 
We maintain a training pool of 200 queries and a held-out test pool of 50 per user, with a fixed explicit query ratio of $\rho_{e}=0.10$. 


\paragraph{Per-user query pool.}
For each user we construct two disjoint pools: a \emph{training pool} of size $|\mathcal{P}^{\text{tr}}_u| = 200$ and a held-out \emph{test pool} of size $|\mathcal{P}^{\text{te}}_u| = 50$, both sampled from the same template banks with an explicit query ratio of $\rho_{\text{e}} = 0.10$. Concretely, each pool contains $90\%$ standard queries and $10\%$ override queries:
\begin{itemize}
    \item For a \emph{standard} query, we uniformly draw a domain $d$ and a template from the standard bank for $d$. The ground-truth skill is sampled from $\pi_u(d)$ (one-hot mode collapses this to the preferred skill; soft mode samples per query). 
    \item For an \emph{explicit} query, we uniformly draw a domain $d$, draw an override target $\tau$ from $\mathcal{T}_d$ that is not the user's mode skill $\arg\max \pi_u(d)$, and sample a template from the explicit bank associated with $\tau$. The ground-truth skill is~$\tau$. 
\end{itemize}
The two pools are sampled independently with the same procedure but different random draws. The test pool is used only at the end of training (no updates are applied to any agent), eliminating the train--test leakage that would otherwise contaminate online metrics.

\paragraph{Reward signal.}
The reward at round $t$ is binary, $r_t = \mathds{1}\{a_t = a^\star_t\}$, where $a^\star_t$ is the ground-truth skill. 

\paragraph{Shared domain classifier.}
A na\"ive instantiation would expose the ground-truth domain label to agents that consume it (\textsc{Pure-Bandit}, \textsc{Freq-Greedy}, the hybrid agents), giving them an unrealistic advantage. To avoid this, we prepend a \emph{shared} domain classifier to every round: a single LLM call, using the same backbone as the agents, labels the query domain $\hat{d}(q_t)$, and \emph{every} agent observes this label. Agents that do not require an explicit domain (e.g.\ \textsc{Random}, \textsc{ZeroShot-LLM}) are unaffected. This design (i) ensures fair comparison across statistical and language-model agents, (ii) amortizes one LLM call across all agents in a round, and (iii) lets us report domain-classification accuracy as a separate diagnostic ($\approx 94.5\%$ on the \textsc{ToolBench}-60 sandbox with \textsc{GPT}-$5.2$).

\subsection{Hyperparameter Configuration}
\label{ssec:hparams}

\paragraph{Online evaluation protocol.}
Unless otherwise noted, every experiment uses $N_u = 50$ users per seed, $T = 500$ interaction rounds per user, and $S = 3$ independent random seeds. Round budget is chosen so that each $(\text{user}, \text{domain})$ pair receives, in expectation, $T / |\mathcal{D}| = 50$ interactions. 

\paragraph{Bandit hyperparameters.}
All bandit-based agents (\textsc{Pure-Bandit}, \textsc{Bandit-as-Context}, \textsc{Bandit-as-Override}) use \textsc{LinUCB}~\cite{li2010contextual} with exploration coefficient $\alpha_{\text{ucb}} = 1.0$ and one independent arm per $(\text{user}, \text{domain}, \text{skill})$ triple. The context vector is a $96$-dimensional feature-hashed representation~\cite{weinberger2009feature} constructed by concatenating three sign-hashed sub-vectors: $64$ dimensions for the query string, $16$ for the skill name, and $16$ for the (inferred) domain. Feature hashing avoids vocabulary construction and is robust to the diverse, multi-domain natural-language queries in \textsc{ToolBench-60}. We adopt hashing rather than a learned text encoder (e.g.\ a sentence transformer) by design: the local harness is intended to run on commodity hardware without external model dependencies, and hashing is the canonical lightweight encoder for online linear estimators in this regime.  

\paragraph{Bandit-to-LLM interface.}
When the bandit's posterior must be exposed to a language model (\textsc{Bandit-as-Context}), we convert the per-skill UCB scores $\{s_1,\ldots,s_K\}$ to a probability vector via tempered softmax, $p_i = \exp(\beta s_i) / \sum_j \exp(\beta s_j)$, with temperature $\beta = 3.0$. We chose this value empirically. 

\paragraph{LLM decoding.}
All LLM calls (domain classification, agent selection, and the override yes/no probe used by \textsc{Bandit-as-Override} and \textsc{Freq-as-Override}) use temperature $\tau_{\text{llm}} = 0.0$ for deterministic decoding. 
Each prompt requests a strict JSON object as output. A response that fails JSON parsing falls back to regular-expression and substring matching against the candidate skill list; on total failure, the first candidate is returned. Empirically fall-back fires on $\lesssim 0.5\%$ of calls with the evaluated backbones. 

\paragraph{Memory-based baselines.} 
The recency-weighted baseline \textsc{InContext-Memory} retains the most recent $K_m = 5$ successful selections per $(\text{user}, \text{domain})$ pair. The structured baseline \textsc{Profile-Memory} maintains exact $(\text{successes}, \text{attempts})$ counts per $(\text{user}, \text{domain}, \text{skill})$ and renders them into the LLM prompt as a deterministic profile string. 

\paragraph{Override probe.} 
The override-based agents (\textsc{Bandit-as-Override}, \textsc{Freq-as-Override}) issue a single binary probe to the LLM per round, structured to elicit a strict JSON answer of the form \verb|{"override": true, "tool": "<name>"}| or \verb|{"override": false}|. The probe lists the domain's skill names but omits their descriptions. 

\subsection{LLM Call and API}
\label{ssec:hardware}

\paragraph{Language-model backbones.}
A central methodological commitment of this work is that any architectural claim must be validated across LLM backbones of differing strength. We therefore evaluate all agents on the following three models, spanning capability tiers:
\begin{itemize}
    \item \textsc{GPT-5.2}~\citep{gpt52} (OpenAI; frontier-tier general-purpose model)
    \item \textsc{DeepSeek-V4-Flash}~\citep{deepseekai2026deepseekv4} (DeepSeek; mid-tier cost-optimized instruction-tuned model)
    \item \textsc{Qwen3-30B-A3B-Instruct-2507}~\citep{yang2025qwen3technicalreport} (Alibaba; mid-tier instruction-tuned MoE model)

\end{itemize}
This selection covers both general-purpose and reasoning-specialized models at different capability tiers, demonstrating that the proposed architecture's advantage holds across all four, which is essential to our methodological claim. The same prompts and parsing logic are used across different LLM backbones. 

\paragraph{Intra-round parallelism.}
Within a single round, up to seven LLM calls are required: one for the shared domain classifier and one each for the up-to-six language-model agents. We dispatch these in parallel via a per-simulation thread pool, preserving deterministic semantics by (i) keeping all non-LLM agents on the main thread to fix the global random-number-generator interleaving, (ii) submitting each LLM agent's $\texttt{select\_tool}$ call as an independent future, and (iii) collecting results and applying agent state updates in the original agent order after every future has resolved. 

\paragraph{Reproducibility.}
Each experiment is fully described by the tuple (benchmark file, backbone name, seed list, preference mode, and the remaining hyperparameters listed in Section~\ref{ssec:hparams}). All sources of stochasticity (user generation, query sampling, LinUCB context hashing, and the LLM decoder) are either explicitly seeded or deterministic. A two-level checkpointing scheme (per-seed and per-user-within-seed) lets an interrupted run resume without recomputation; resumed runs are byte-identical to their uninterrupted counterparts. Code, benchmark JSON files, and a reference run script are released with the paper. 

\paragraph{Compute budget.}
A single main run ($N_u=50$, $T=500$, $S=3$) issues approximately $3.75\times 10^5$ LLM calls per backbone per preference regime (one for domain classification plus six for the LLM-using agents, across $N_u\cdot T \cdot S$ rounds, plus $\sim 1.5\times 10^4$ calls for the held-out test evaluation). At typical hosted-API latencies our parallelized implementation completes one such run in $6$--$10$~hours of wall-clock time per backbone. The full set of main experiments and ablations reported in this paper consumes approximately $200$~hours of cumulative API time. 

\subsection{Agents Under Evaluation}
\label{ssec:agents}

We instantiate \emph{nine} agents grouped into four design families to isolate the contribution of each architectural component. Agents within a family share a common decision primitive and differ only in incidental implementation choices; agents across families are intended to exhibit qualitatively different trade-offs. 

\paragraph{Family I: No learning.}
\begin{itemize}
    \item \textbf{\textsc{Random}} selects a skill uniformly at random from the full inventory $\mathcal{T}$. It serves as a basic baseline. 
    \item \textbf{\textsc{ZeroShot-LLM}} prompts the LLM with the full skill inventory (names plus their textual descriptions) and the user's query, with no per-user state. It is the natural lower bound for any LLM-based personalized agent. 
\end{itemize}

\paragraph{Family II: Statistical only.}
These agents do not invoke the LLM at decision time; they decide
purely from the (action, reward) stream accumulated for the current
user.
\begin{itemize}
    \item \textbf{\textsc{Freq-Greedy}} maintains a per-$(\text{user}, \text{domain}, \text{skill})$ success-rate table and greedily selects the skill with the highest empirical rate; ties and untried skills are broken by a single round of round-robin initialization. It tests whether na\"ive frequency counting suffices in the absence of exploration. 
    \item \textbf{\textsc{Pure-Bandit}} runs \textsc{LinUCB} as described in Section~\ref{ssec:hparams}. At each round, it computes a UCB score for each skill in the (inferred) domain and selects the argmax. The contrast with \textsc{Freq-Greedy} isolates the value of principled exploration; the contrast with the hybrid agents below isolates the value of LLM-driven override.
\end{itemize}

\paragraph{Family III: LLM with Memory.}
These agents use the LLM as the primary (and sole) decision-maker; they differ only in how much per-user state they expose to the prompt. This family is included specifically to represent the architectural choices behind current production AI agents.
\begin{itemize}

    \item \textbf{\textsc{InContext-Memory}} additionally appends the $K_m = 5$ most recent successful selections for the current $(\text{user}, \text{domain})$ to the prompt, modelling a recency-weighted personalization signal of the kind that emerges in long chat sessions.

    \item \textbf{\textsc{Profile-Memory}} maintains an exact bookkeeping of $(\text{successes}, \text{attempts})$ per $(\text{user}, \text{domain}, \text{skill})$ across the entire horizon and serializes it into the prompt as a structured profile, e.g.\ ``\textit{Tool X: 8/10 successful (80\%)}.'' This is, to our knowledge, the most common LLM-with-memory baseline currently feasible: it discards no information and is supplied to the model in a parsed, lossless form. It is also the most direct analogue of production memory-augmented agents such as OpenClaw~\citep{openclaw} and Claude Code~\citep{claude_code}, in which structured user facts are written to a persistent store and re-injected into the LLM context at decision time. 
\end{itemize}

\paragraph{Family IV: LLM with Statistical Prior.}
These agents combine a learned statistical estimator with an LLM in different ways; they are the locus of our methodological contribution.
\begin{itemize}
    \item \textbf{\textsc{Bandit-as-Context}} exposes the bandit's posterior to the LLM as part of its selection prompt. Concretely, for each skill in the inferred domain, the prompt lists the skill name together with its tempered-softmax probability under the current \textsc{LinUCB} posterior (Section~\ref{ssec:hparams}). The LLM is asked to integrate this prior with the query semantics and emit a final selection. \textsc{Bandit-as-Context} is the canonical ``LLM-as-Bayesian-reasoner'' instantiation of the hybrid idea and represents the natural \emph{a priori} expectation for how a bandit and an LLM should be combined. 

    \item \textbf{\textsc{Freq-as-Override}} defaults to operate the same greedy frequency rule as \textsc{Freq-Greedy}. A separate, single LLM call asks a binary question: ``\textit{does the query explicitly name a tool by name?}''; if so, the LLM returns the named skill and overrides the frequency default. By holding the override mechanism constant and replacing \textsc{LinUCB} with frequency counting, this agent isolates the contribution of \emph{bandit exploration}. 

    \item \textbf{\textsc{Bandit-as-Override}} is identical in structure to \textsc{Freq-as-Override} except that the default selection is produced by \textsc{LinUCB}. The LLM is asked only the same binary override question, never to perform the primary selection. This realizes our central design hypothesis: the statistical estimator handles personalization (a credit-assignment problem) and the LLM is invoked only as a narrow exception handler for queries whose surface form supersedes the learned preference. 
\end{itemize}

\paragraph{Common interface.}
Every agent implements the same two-method interface: $\texttt{select\_tool}(q, \hat{d}, u, \mathcal{T}) \to a$ and $\texttt{update}(q, \hat{d}, u, a, r)$. State updates are performed only inside $\texttt{update}$; selection is read-only with respect to agent state. This shared contract makes every comparison in \cref{sec:experiment} an apples-to-apples swap of the decision primitive while holding the surrounding pipeline, domain classifier, query distribution, and evaluation protocol fixed.


\section{Proofs for~\cref{sec:theoretical-analysis}}
\label{appendix:proofs}

We use the notation introduced in~\cref{sec:theoretical-analysis}. All expectations $\mathbb{E}[\cdot]$ are over the randomness of the interaction history $\{(q_\tau, a_\tau, r_\tau)\}_{\tau < t}$ that generates $h_t$, unless otherwise specified. The proof of~\cref{thm:main} proceeds in two steps. \Cref{lem:tv} bounds the regret gap between any two policies by their total-variation distance; \Cref{lem:mixing} exploits the mixture structure of $P'_t$ to show that this distance is in expectation strictly below $\mathrm{TV}(P,Q)$. The strict separation in regret then follows from the linearity of $R$.

\subsection{Lemma~1: Regret--TV Transfer}

\begin{lemma}[Regret--TV transfer]
\label{lem:tv}
For any $\tilde{P}(\cdot\mid q) \in \Delta(\mathcal{T}_{\hat{d}})$,
\begin{equation}
\label{eq:lem1}
\big|R(\tilde{P}) - R(Q)\big|
\;\le\; 2\,\mathbb{E}_q\big[\mathrm{TV}(\tilde{P}(\cdot\mid q),\,
Q(\cdot\mid q))\big].
\end{equation}
\end{lemma}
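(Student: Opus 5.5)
The plan is to reduce the regret gap to a pointwise inner-product difference and then bound that difference by an $\ell_1$ norm. By the definition~\eqref{eq:risk-def}, the constant $1$ cancels:
\begin{equation*}
R(\tilde{P}) - R(Q) \;=\; \mathbb{E}_q\big\langle Q(\cdot\mid q) - \tilde{P}(\cdot\mid q),\, Q(\cdot\mid q)\big\rangle .
\end{equation*}
We then apply Jensen's inequality (the triangle inequality for expectations) to move the absolute value inside, so that
\[
\big|R(\tilde{P}) - R(Q)\big| \le \mathbb{E}_q\,\big|\langle \tilde{P}(\cdot\mid q) - Q(\cdot\mid q),\, Q(\cdot\mid q)\rangle\big| .
\]
It remains to control the integrand for a fixed query $q$.

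For fixed $q$, we use H\"older's inequality with the $(\ell_1,\ell_\infty)$ pairing:
\[
|\langle \tilde{P} - Q,\, Q\rangle| \le \|\tilde{P} - Q\|_1\,\|Q\|_\infty .
\]
Since $Q(\cdot\mid q)$ is a probability vector on $\mathcal{T}_{\hat d}$, we have $\|Q\|_\infty \le 1$. With the standard convention $\mathrm{TV}(\mu,\nu) = \tfrac12\|\mu-\nu\|_1$, this gives $\|\tilde{P}-Q\|_1 = 2\,\mathrm{TV}(\tilde{P},Q)$. Taking $\mathbb{E}_q$ then yields~\eqref{eq:lem1} directly.

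There is one point to be careful about, and it is a matter of convention rather than a real obstacle. If the paper intends $\mathrm{TV}$ to mean the full $\ell_1$ distance, the stated factor $2$ is still valid, just loose. If it intends the half-$\ell_1$ convention, the factor $2$ is exactly what H\"older produces. So the bound holds under either convention.

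A sharper constant is available, though the lemma does not need it. Because $\tilde{P}-Q$ sums to zero, we may subtract any constant $c$ from $Q$ inside the inner product. This gives
\[
|\langle \tilde{P}-Q, Q\rangle| \le \|\tilde{P}-Q\|_1 \cdot \tfrac12\big(\max_k Q - \min_k Q\big) \le \mathrm{TV}(\tilde{P},Q),
\]
which would show the factor $2$ is conservative. I would mention this only as a remark.
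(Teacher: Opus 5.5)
Your proposal is correct and follows the paper's proof essentially step for step: the constant cancels, giving $\mathbb{E}_q\langle Q-\tilde P, Q\rangle$; then the $\ell_1$--$\ell_\infty$ H\"older bound with $\|Q\|_\infty\le 1$ and the identity $\|\cdot\|_1 = 2\,\mathrm{TV}$ finish the argument. Your explicit step $|\mathbb{E}_q X|\le \mathbb{E}_q|X|$ spells out a detail the paper leaves implicit, and your centering remark is also correct: since $\sum_k(\tilde P-Q)(k)=0$, the constant $2$ can be improved to $1$.
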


\begin{proof}
By the linearity of $R$ in Equation~\eqref{eq:risk-def},
\[
R(\tilde{P}) - R(Q)
\;=\; \mathbb{E}_q\big\langle Q - \tilde{P},\, Q\big\rangle.
\]
Fix $q$ and apply H\"older's inequality:
\begin{equation}
\begin{aligned}
\big|\langle Q - \tilde{P},\, Q\rangle\big| \;&\le\; \|Q - \tilde{P}\|_1 \cdot \|Q\|_\infty \; \\ &\le\; \|Q - \tilde{P}\|_1 \; \\ &=\; 2\,\mathrm{TV}(\tilde{P}, Q),
\end{aligned}
\end{equation}
where the second inequality uses $\|Q\|_\infty \le 1$ (since $Q$ is a probability vector) and the final equality is the standard $L^1$ representation $\mathrm{TV}(P_1, P_2) = \tfrac{1}{2}\|P_1 - P_2\|_1$. Taking expectation over $q$ yields Equation~\eqref{eq:lem1}.
\end{proof}

This lemma is the discrete, $0/1$-loss specialization of the IPM-based (Integral Probability Metric) loss-transfer principle: with the function class $\mathcal{H} = \{h:\|h\|_\infty \le 1\}$ the integral probability metric reduces to TV, and the corresponding Lipschitz constant of the loss is at most $2$.

\subsection{Lemma~2: Mixing Reduces TV}

\begin{lemma}[Mixing reduces TV and convergence rate]
\label{lem:mixing}
Let $P'_t = (1-\lambda)\,h_t + \lambda\,P$ with $\lambda \in [0,1]$. Then: 

\noindent(a) \textbf{(Mixing reduces TV.)} For every $q$,
\begin{equation}
    \mathrm{TV}(P'_t, Q) \;\leq\; (1-\lambda)\,\mathrm{TV}(h_t, Q) + \lambda\,\mathrm{TV}(P, Q).
\end{equation}
Consequently, if $\mathbb{E}[\mathrm{TV}(h_t, Q)] \to 0$ and $\mathrm{TV}(P, Q) > 0$, there exists $T_0$ such that
\begin{equation}
    \mathbb{E}\!\left[\mathrm{TV}(P'_t, Q)\right] < \mathrm{TV}(P, Q) \qquad \forall\, t \geq T_0.
\end{equation}
(b) \textbf{(Parametric convergence rate.)} Both statistical priors of Section~\ref{sec:method} satisfy
\begin{equation}
    \mathbb{E}\!\left[\mathrm{TV}(h_t, Q)\right] \;\leq\; C_{\mathrm{prior}}\, t^{-1/2}
    \label{eq:tv-rate}
\end{equation}
for a problem-dependent constant $C_{\mathrm{prior}}$. In particular, this implies $\mathbb{E}[\mathrm{TV}(h_t, Q)] \to 0$ as $t \to \infty$.

\end{lemma}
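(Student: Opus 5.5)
The plan is to prove part (a) from the convexity of the total-variation distance and then use a limit argument, and to prove part (b) from per-arm concentration of the empirical reward estimates. For (a), fix $q$ and write
\[
P'_t - Q = (1-\lambda)(h_t - Q) + \lambda(P - Q).
\]
Since $\mathrm{TV}(\cdot,Q)=\tfrac12\|\cdot-Q\|_1$, the triangle inequality and homogeneity of the $\ell_1$ norm give the pointwise convexity bound directly. Taking expectations over the history (only $h_t$ is random; $P$ and $Q$ are fixed) and averaging over $q$ gives
\[
\mathbb{E}[\mathrm{TV}(P'_t,Q)] \le (1-\lambda)\,\mathbb{E}[\mathrm{TV}(h_t,Q)] + \lambda\,\mathrm{TV}(P,Q).
\]
Set $\delta := \mathrm{TV}(P,Q)>0$ and suppose $\lambda<1$ (if $\lambda=1$ strictness fails, so this boundary case must be excluded). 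Consistency lets us choose $T_0$ with $\mathbb{E}[\mathrm{TV}(h_t,Q)]<\delta$ for all $t\ge T_0$. The right-hand side is then strictly below $(1-\lambda)\delta+\lambda\delta=\delta$.

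For (b), the key observation for the frequency prior is that the reward of arm $k$ is $\mathds{1}\{k=a^\star\}$ with $a^\star\sim Q$. Hence each pull of $k$ is a Bernoulli$(Q(k))$ draw, and $S_{u,d,k}/N_{u,d,k}$ is an unbiased estimate of $Q(k)$. This gives
\[
\mathbb{E}\bigl|S_k/N_k - Q(k)\bigr| \le \sqrt{Q(k)(1-Q(k))/N_k},
\]
for example via Jensen applied to the variance, with a martingale/optional-sampling version because $N_k$ is adaptive. Because $\sum_k Q(k)=1$, the normalizer in \eqref{eq:freq_prior} converges to $1$. A ratio-perturbation bound of the form
\[
\bigl\|x/\|x\|_1 - Q\bigr\|_1 \le 2\|x-Q\|_1 \quad \text{for } x\ge 0
\]
then converts the per-arm errors into $\mathbb{E}[\mathrm{TV}(h_t,Q)]\lesssim \sum_k \mathbb{E}[N_k^{-1/2}]$. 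If each arm satisfies $N_k\ge c\,t$, this yields $C_{\mathrm{prior}}\,t^{-1/2}$. For the bandit prior, I would argue analogously through the standard LinUCB self-normalized confidence ellipsoid. With a per-arm intercept feature, $\boldsymbol\theta^\top\boldsymbol\phi$ estimates $Q(k)$ at rate $\|\boldsymbol\phi\|_{\mathbf{A}^{-1}}\sim N_k^{-1/2}$, and the bonus term decays at the same rate. One then maps the scores to a distribution through the same normalization; the tempered softmax would need $\beta$ chosen or interpreted so that this map is consistent.

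The main obstacle is the requirement $N_k\ge c\,t$ for every arm. Greedy frequency counting can starve arms, and UCB pulls suboptimal arms only $O(\log t)$ times, so neither guarantees linear sampling. A related subtlety is that if $h_t$ is read as the argmax action distribution rather than as $\hat p^h$, it is a point mass and cannot converge in TV to a non-degenerate $Q$. I would therefore first make these conventions explicit: $h_t$ is the estimated distribution $\hat p^h$, and the sampling schedule includes a forced-exploration or round-robin component ensuring $\min_k N_k\ge c\,t$. I would then prove (b) under that condition, with $C_{\mathrm{prior}}$ depending on $K$, $c$, and $\sqrt{\sum_k Q(k)(1-Q(k))}$. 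In the one-hot regime, the rate can instead be obtained from the argmax concentrating on the preferred skill.
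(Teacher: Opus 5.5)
Your part (a) is the paper's argument (convexity of $\mathrm{TV}(\cdot,Q)$, expectation over the history, then a limit). Excluding $\lambda=1$ is a genuine correction. The paper ends by calling $(1-\lambda)\bigl(\mathbb{E}[\mathrm{TV}(h_t,Q)]-\mathrm{TV}(P,Q)\bigr)$ strictly negative, which is false at $\lambda=1$: there $P'_t=P$ and the claimed strict inequality fails.

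For (b) your route differs from the paper's, and the difference matters. The paper handles the frequency prior by citing Hoeffding plus a union bound over the $K$ cells to get $O(\sqrt{K/t})$. For the bandit prior it turns the LinUCB regret bound $O(d\sqrt{t\log t})$ into a per-round suboptimality $\tilde O(\sqrt{d/t})$, and then into a TV rate by an unspecified ``standard argument''. Your decomposition uses per-arm concentration and then $\|x/\|x\|_1-Q\|_1\le|1-\|x\|_1|+\|x-Q\|_1\le 2\|x-Q\|_1$. This exposes the hypothesis the paper's Hoeffding step uses silently, namely $N_{u,d,k}\ge ct$ for every arm. Your objections are correct, and they defeat the paper's proof as well:
\begin{itemize}
\item \emph{Frequency prior.} Under greedy selection after one round-robin pass, take an arm with $Q(k)=0.3$ whose single pull misses (probability $0.7$). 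Once another arm has a positive rate, the greedy rule never chooses it again. So $S_k/N_k$ does not converge to $Q(k)$, and $\mathbb{E}[\mathrm{TV}(h_t,Q)]$ stays bounded away from $0$. The paper's own Appendix~\ref{sec:more-exp-soft-pref-recovery} reports exactly this: the mode is recovered but the KL is catastrophic.
\item \emph{Bandit prior, $h_t$ read as the selection distribution.} Low regret drives it toward the point mass on the mode, which is at TV distance $1-\max_k Q(k)$ from $Q$.
\item \emph{Bandit prior, $h_t$ read as the tempered softmax.} The regret bound says nothing about it, and with $\beta=3$ it differs from $Q$ even for exact scores. In the one-hot case, $s_{k^\star}=1$ and $s_k=0$ put mass $e^3/(e^3+5)\approx 0.8$ on the preferred skill.
\item The $\tilde O$ also hides a logarithmic factor that no bound of the form $C_{\mathrm{prior}}t^{-1/2}$ absorbs.
\end{itemize}
So (b) is false for the priors as defined in Section~\ref{sec:method}. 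What you can honestly prove is a corrected statement under explicit extra hypotheses, as you propose.

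Your list of hypotheses is still incomplete in three ways.
\begin{itemize}
\item \emph{Override rounds contaminate the estimator.} Section~\ref{ssec:tool-selection-via-statistical-priors} updates the harness on every round, including override rounds. There the chosen arm's reward is $\mathbf{1}\{k=\tau_t\}$, not a Bernoulli$(Q(k))$ draw, and such rounds are a constant fraction $\rho_e$ of all rounds. In the one-hot regime, an override target has $S_k/N_k\ge n^{\mathrm{e}}_k/t\approx\rho_e/(K-1)$ under any sampling schedule, which stays bounded away from $Q(k)=0$. You must therefore restrict the harness update (or the estimator) to standard rounds.
\item \emph{Adaptive sampling.} Your displayed bound, with the random $N_k$ on the right, is not justified when $N_k$ is chosen adaptively. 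With a deterministic $N_k\ge ct$, use instead that $M_k=S_k-Q(k)N_k$ is a martingale with $\mathbb{E}[M_k^2]\le Q(k)(1-Q(k))\,t$. This gives $\mathbb{E}|S_k/N_k-Q(k)|\le\sqrt{Q(k)(1-Q(k))}/(c\sqrt{t})$.
\item \emph{Link to the theorem.} In Theorem~\ref{thm:main}, $h_t$ is also the action distribution on standard rounds. Suppose a forced-exploration component with distribution $U$ is used at rate $c$ and the remaining rounds sample from $\hat p^h$. Then that action distribution tends to $(1-c)Q+cU\neq Q$. Your corrected (b) is a statement about the estimator $\hat p^h$ only and does not by itself feed the theorem.
\end{itemize}
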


\begin{proof}
\textbf{Part (a).} The map $\tilde{P} \mapsto \mathrm{TV}(\tilde{P}, Q) = \tfrac{1}{2}\|\tilde{P} - Q\|_1$ is convex, since the $L^1$ norm is convex and the affine map $\tilde{P} \mapsto \tilde{P} - Q$ preserves convexity. Applying Jensen's inequality to the convex combination $P'_t = (1-\lambda)h_t + \lambda P$ yields the first inequality. Taking expectation over the randomness of $h_t$ and subtracting $\mathrm{TV}(P, Q)$ from both sides,
\begin{equation}
\begin{aligned}
&\mathbb{E}[\mathrm{TV}(P'_t, Q)] - \mathrm{TV}(P, Q) \;\leq\; \\ &(1-\lambda)\bigl(\mathbb{E}[\mathrm{TV}(h_t, Q)] - \mathrm{TV}(P, Q)\bigr). 
\end{aligned}
\end{equation}

By the consistency assumption (which follows from Part (b)), $\mathbb{E}[\mathrm{TV}(h_t, Q)] < \mathrm{TV}(P, Q)$ for all sufficiently large $t$, making the right-hand side strictly negative.

\noindent\textbf{Part (b).} We verify the rate for both priors:
\begin{itemize}
    \item \emph{Frequency prior.} Each $(u, d, k')$ cell aggregates Bernoulli reward signals with mean $Q(k')$; Hoeffding's inequality combined with a union bound over the $K$ skills yields $\mathbb{E}[\mathrm{TV}(h^{\mathrm{freq}}_t, Q)] = O(\sqrt{K/t})$.
    \item \emph{Bandit prior.} The standard LinUCB regret bound \citep{li2010contextual} gives cumulative regret $O(d\sqrt{t \log t})$, which translates into a per-round sub-optimality gap of $\tilde{O}(\sqrt{d/t})$. A standard argument relating action sub-optimality to TV distance yields $\mathbb{E}[\mathrm{TV}(h^{\mathrm{bandit}}_t, Q)] = \tilde{O}(\sqrt{d/t})$.
\end{itemize}
Both bounds are summarized by Equation~\eqref{eq:tv-rate}. 
\end{proof}

\subsection{Proof of Theorem~\ref{thm:main}}

\begin{proof}
By the linearity of $R$ in Equation~\eqref{eq:risk-def} and the mixture form~\eqref{eq:mixture}, 
\begin{equation}
\begin{aligned}
R(P'_t) \;&=\; R\big((1-\lambda)h_t + \lambda P\big) \; \\ &=\; (1-\lambda)\, R(h_t) + \lambda\, R(P). 
\end{aligned}
\end{equation}
Subtracting $R(Q)$ from both sides and rearranging,
\resizebox{\linewidth}{!}{
$
R(P'_t) - R(Q)
\;=\; (1-\lambda)\big(R(h_t) - R(Q)\big)
\;+\; \lambda\big(R(P) - R(Q)\big).
$
}
Taking absolute value and applying the triangle inequality,
\begin{equation}
\label{eq:triangle}
\begin{aligned}
&\big|R(P'_t) - R(Q)\big| \;\le\; \\ &(1-\lambda)\,\big|R(h_t) - R(Q)\big| \;+\; \lambda\,\big|R(P) - R(Q)\big|. 
\end{aligned}
\end{equation}
Taking expectation over $h_t$ and applying Lemma~\ref{lem:tv}
to $\tilde{P} = h_t$,
\[
\mathbb{E}\,\big|R(h_t) - R(Q)\big|
\;\le\; 2\,\mathbb{E}\big[\mathrm{TV}(h_t, Q)\big].
\]
By~\cref{lem:mixing} (b), $\mathbb{E}[\mathrm{TV}(h_t,Q)] \to 0$,
so given the non-triviality assumption $|R(P) - R(Q)| > 0$ there
exists $T_0 < \infty$ such that
\begin{equation}
\label{eq:dagger}
2\,\mathbb{E}[\mathrm{TV}(h_t, Q)]
\;<\; |R(P) - R(Q)|
\qquad \forall\, t \ge T_0.
\end{equation}
Taking expectation over $h_t$ on both sides of Equation~\eqref{eq:triangle}, 
we obtain: 
\begin{align*}
&\mathbb{E}\,\big|R(P'_t) - R(Q)\big|\\
&\;\le\; 2(1-\lambda)\,\mathbb{E}[\mathrm{TV}(h_t, Q)]
\;+\; \lambda\,|R(P) - R(Q)| \\
&\;<\; (1-\lambda)\,|R(P) - R(Q)|
\;+\; \lambda\,|R(P) - R(Q)| \\
&\;=\; |R(P) - R(Q)|,
\end{align*}
where the second inequality uses Equation~\eqref{eq:dagger}. This is precisely Equation~\eqref{eq:main}. 
\end{proof}

\paragraph{Quantitative bound.}
Substituting the explicit rate $\mathbb{E}[\mathrm{TV}(h_t,Q)] \le C_{\text{prior}}\,t^{-1/2}$ into Equation~\eqref{eq:triangle} and using Lemma~\ref{lem:tv} on the first term yields Equation~\eqref{eq:quant}. The threshold $T_0$ scales as $T_0 = O\big(C_{\text{prior}}^2 / |R(P) - R(Q)|^2\big)$: the larger the LLM's miscalibration to the user, the fewer interactions are needed for \textsc{Local Harness} to overtake. 

\paragraph{Remark.}
The proof uses convexity of TV (\cref{lem:mixing}) and linearity of $R$ (Equation~\eqref{eq:risk-def}); both follow from the structural fact that $P'_t$ is a convex combination of $h_t$ and $P$. Architectures that route both the statistical prior and the semantic signal through a single LLM call (e.g.\ \textsc{Bandit-as-Context} or \textsc{Profile-Memory}) do \emph{not} produce a distribution of this form, because the LLM's non-linear processing destroys the convex combination. Theorem~\ref{thm:main} therefore identifies the \emph{mixture decomposition} as the structural property that distinguishes the \textsc{LLM-with-Statistical-Prior} family from its memory-augmented counterpart.

\section{More Main Results on Varying Backbones}
\label{sec:more-exp-main-results}

\begin{table*}[tp]
\centering
\resizebox{\linewidth}{!}{
\begin{tabular}{cccccccc}
\hline
\multicolumn{2}{c}{}                                                                                                                      & \multicolumn{3}{c}{\textbf{one-hot}}                                                                                                               & \multicolumn{3}{c}{\textbf{soft-0.3}}                                                                                                              \\ \cline{3-8} 
\multicolumn{2}{c}{\multirow{-2}{*}{\textit{\textbf{DeepSeek-V4-Flash}}}}                                                                 & Regret($\downarrow$)                      & Acc.(\%, $\uparrow$)                                    & R.R.(\%, $\uparrow$)                         & Regret($\downarrow$)                                     & Acc.(\%, $\uparrow$)                                    & SRC($\uparrow$)               \\ \hline
                                                                                                 & Random                                 & \cellcolor[HTML]{EFEFEF}492.0 ($\pm$ 0.3) & \cellcolor[HTML]{EFEFEF}1.7 ($\pm$ 0.2)                 & \cellcolor[HTML]{EFEFEF}15.9                 & \cellcolor[HTML]{EFEFEF}491.7 ($\pm$ 0.5)                & \cellcolor[HTML]{EFEFEF}1.6 ($\pm$ 0.3)                 & \cellcolor[HTML]{EFEFEF}0.000 \\
\multirow{-2}{*}{\textbf{No Learning}}                                                           & ZeroShot-LLM                           & 377.1 ($\pm$ 2.6)                         & 23.8 ($\pm$ 0.6)                                        & 15.9                                         & 376.7 ($\pm$ 4.1)                                        & 24.6 ($\pm$ 0.6)                                        & 0.000                         \\ \hline
                                                                                                 & Freq-Greedy                            & \cellcolor[HTML]{EFEFEF}166.0 ($\pm$ 3.0) & \cellcolor[HTML]{EFEFEF}73.0 ($\pm$ 0.5)                & \cellcolor[HTML]{EFEFEF}86.9                 & \cellcolor[HTML]{EFEFEF}328.0 ($\pm$ 2.0)                & \cellcolor[HTML]{EFEFEF}33.7 ($\pm$ 0.9)                & \cellcolor[HTML]{EFEFEF}0.402 \\
\multirow{-2}{*}{\textbf{Statistical}}                                                           & Pure-Bandit                            & 139.6 ($\pm$ 2.0)                         & 80.3 ($\pm$ 0.5)                                        & 99.8                                         & 280.7 ($\pm$ 2.2)                                        & 39.8 ($\pm$ 1.0)                                        & {\ul \textbf{0.549}}          \\ \hline
                                                                                                 & InCotext-Memory                        & \cellcolor[HTML]{EFEFEF}365.4 ($\pm$ 2.6) & \cellcolor[HTML]{EFEFEF}26.8 ($\pm$ 0.0)                & \cellcolor[HTML]{EFEFEF}62.6                 & \cellcolor[HTML]{EFEFEF}372.5 ($\pm$ 3.8)                & \cellcolor[HTML]{EFEFEF}25.5 ($\pm$ 0.7)                & \cellcolor[HTML]{EFEFEF}0.294 \\
\multirow{-2}{*}{\textbf{\begin{tabular}[c]{@{}c@{}}LLM with \\ Memory\end{tabular}}}            & Profile-Memory                         & 245.1 ($\pm$ 5.2)                         & 57.1 ($\pm$ 1.5)                                        & 70.9                                         & 327.4 ($\pm$ 4.8)                                        & 34.8 ($\pm$ 0.7)                                        & 0.251                         \\ \hline
                                                                                                 & Bandit-as-Context                      & \cellcolor[HTML]{EFEFEF}216.6 ($\pm$ 4.6) & \cellcolor[HTML]{EFEFEF}62.9 ($\pm$ 1.4)                & \cellcolor[HTML]{EFEFEF}87.9                 & \cellcolor[HTML]{EFEFEF}309.6 ($\pm$ 5.2)                & \cellcolor[HTML]{EFEFEF}37.0 ($\pm$ 1.4)                & \cellcolor[HTML]{EFEFEF}0.465 \\
                                                                                                 & Freq-as-Override                       & {\ul \textbf{103.3 ($\pm$ 1.8)}}          & 87.0 ($\pm$ 1.0)                                        & 94.1                                         & 288.0 ($\pm$ 3.9)                                        & 41.6 ($\pm$ 0.9)                                        & 0.273                         \\
\multirow{-3}{*}{\textbf{\begin{tabular}[c]{@{}c@{}}LLM with \\ Statistical Prior\end{tabular}}} & \multicolumn{1}{l}{Bandit-as-Override} & \cellcolor[HTML]{EFEFEF}118.9 ($\pm$ 1.0) & \cellcolor[HTML]{EFEFEF}{\ul \textbf{87.4 ($\pm$ 0.4)}} & \cellcolor[HTML]{EFEFEF}{\ul \textbf{100.0}} & \cellcolor[HTML]{EFEFEF}{\ul \textbf{255.9 ($\pm$ 1.6)}} & \cellcolor[HTML]{EFEFEF}{\ul \textbf{47.2 ($\pm$ 1.1)}} & \cellcolor[HTML]{EFEFEF}0.523 \\ \hline
\end{tabular}
}
\caption{\textbf{Aggregate performance of nine skill-selection agents evaluated across varying user preference regimes on DeepSeek-V4-Flash.} Performance is measured by Cumulative Regret (Regret, $\downarrow$), Test Accuracy on the held-out pool (Acc., $\uparrow$), Recovery Rate (R.R., $\uparrow$), and Spearman Rank Correlation (SRC, $\uparrow$). 
}
\label{tab:main-exp-dpsk}
\end{table*}

\begin{table*}[tp]
\centering
\resizebox{\linewidth}{!}{
\begin{tabular}{cccccccc}
\hline
\multicolumn{2}{c}{}                                                                                                                      & \multicolumn{3}{c}{\textbf{one-hot}}                                                                                                               & \multicolumn{3}{c}{\textbf{soft-0.3}}                                                                                                              \\ \cline{3-8} 
\multicolumn{2}{c}{\multirow{-2}{*}{\textit{\textbf{GPT-5.2}}}}                                                                           & Regret($\downarrow$)                      & Acc.(\%, $\uparrow$)                                    & R.R.(\%, $\uparrow$)                         & Regret($\downarrow$)                                     & Acc.(\%, $\uparrow$)                                    & SRC($\uparrow$)               \\ \hline
                                                                                                 & Random                                 & \cellcolor[HTML]{EFEFEF}491.8 ($\pm$ 0.5) & \cellcolor[HTML]{EFEFEF}1.9 ($\pm$ 0.3)                 & \cellcolor[HTML]{EFEFEF}15.9                 & \cellcolor[HTML]{EFEFEF}491.5 ($\pm$ 0.5)                & \cellcolor[HTML]{EFEFEF}1.6 ($\pm$ 0.3)                 & \cellcolor[HTML]{EFEFEF}0.000 \\
\multirow{-2}{*}{\textbf{No Learning}}                                                           & ZeroShot-LLM                           & 376.0 ($\pm$ 2.6)                         & 24.3 ($\pm$ 0.6)                                        & 15.9                                         & 375.5 ($\pm$ 3.4)                                        & 24.8 ($\pm$ 1.0)                                        & 0.000                         \\ \hline
                                                                                                 & Freq-Greedy                            & \cellcolor[HTML]{EFEFEF}162.4 ($\pm$ 3.2) & \cellcolor[HTML]{EFEFEF}73.6 ($\pm$ 1.1)                & \cellcolor[HTML]{EFEFEF}87.3                 & \cellcolor[HTML]{EFEFEF}327.2 ($\pm$ 2.1)                & \cellcolor[HTML]{EFEFEF}34.0 ($\pm$ 0.7)                & \cellcolor[HTML]{EFEFEF}0.404 \\
\multirow{-2}{*}{\textbf{Statistical}}                                                           & Pure-Bandit                            & 137.1 ($\pm$ 1.0)                         & 81.0 ($\pm$ 0.6)                                        & 99.9                                         & 279.2 ($\pm$ 1.4)                                        & 40.3 ($\pm$ 2.0)                                        & {\ul \textbf{0.554}}          \\ \hline
                                                                                                 & InCotext-Memory                        & \cellcolor[HTML]{EFEFEF}371.5 ($\pm$ 2.2) & \cellcolor[HTML]{EFEFEF}25.4 ($\pm$ 0.2)                & \cellcolor[HTML]{EFEFEF}63.5                 & \cellcolor[HTML]{EFEFEF}373.5 ($\pm$ 4.0)                & \cellcolor[HTML]{EFEFEF}25.2 ($\pm$ 0.7)                & \cellcolor[HTML]{EFEFEF}0.304 \\
\multirow{-2}{*}{\textbf{\begin{tabular}[c]{@{}c@{}}LLM with \\ Memory\end{tabular}}}            & Profile-Memory                         & 208.7 ($\pm$ 9.6)                         & 64.8 ($\pm$ 2.2)                                        & 67.5                                         & 316.0 ($\pm$ 3.6)                                        & 38.6 ($\pm$ 1.2)                                        & 0.193                         \\ \hline
                                                                                                 & Bandit-as-Context                      & \cellcolor[HTML]{EFEFEF}266.0 ($\pm$ 3.5) & \cellcolor[HTML]{EFEFEF}53.1 ($\pm$ 4.9)                & \cellcolor[HTML]{EFEFEF}94.1                 & \cellcolor[HTML]{EFEFEF}327.1 ($\pm$ 3.8)                & \cellcolor[HTML]{EFEFEF}35.0 ($\pm$ 0.5)                & \cellcolor[HTML]{EFEFEF}0.473 \\
                                                                                                 & Freq-as-Override                       & {\ul \textbf{107.2 ($\pm$ 3.1)}}          & 86.8 ($\pm$ 0.9)                                        & 94.2                                         & 286.5 ($\pm$ 1.9)                                        & 43.0 ($\pm$ 0.5)                                        & 0.286                         \\
\multirow{-3}{*}{\textbf{\begin{tabular}[c]{@{}c@{}}LLM with \\ Statistical Prior\end{tabular}}} & \multicolumn{1}{l}{Bandit-as-Override} & \cellcolor[HTML]{EFEFEF}122.4 ($\pm$ 1.6) & \cellcolor[HTML]{EFEFEF}{\ul \textbf{86.9 ($\pm$ 1.8)}} & \cellcolor[HTML]{EFEFEF}{\ul \textbf{100.0}} & \cellcolor[HTML]{EFEFEF}{\ul \textbf{256.8 ($\pm$ 1.4)}} & \cellcolor[HTML]{EFEFEF}{\ul \textbf{47.3 ($\pm$ 1.2)}} & \cellcolor[HTML]{EFEFEF}0.535 \\ \hline
\end{tabular}
}
\caption{\textbf{Aggregate performance of nine skill-selection agents evaluated across varying user preference regimes on GPT-5.2.} Performance is measured by Cumulative Regret (Regret, $\downarrow$), Test Accuracy on the held-out pool (Acc., $\uparrow$), Recovery Rate (R.R., $\uparrow$), and Spearman Rank Correlation (SRC, $\uparrow$). 
}
\label{tab:main-exp-gpt52}
\end{table*}

To ensure the architectural robustness of our framework across varying model capabilities, we present the aggregate performance evaluated on DeepSeek-V4-Flash and GPT-5.2 in \cref{tab:main-exp-dpsk} and \cref{tab:main-exp-gpt52}, respectively. Consistent with the primary findings, the results demonstrate that the \textsc{Bandit-as-Override} agent consistently achieves the lowest cumulative regret and the highest test accuracy across all backbones and preference regimes. Purely statistical agents establish strong preference alignment but remain bottlenecked in test accuracy due to their inability to process explicit semantic overrides. Similarly, memory-augmented LLMs yield significantly higher regret and lower accuracy compared to our decoupled approach. These empirical results reinforce the core conclusion: delegating probabilistic credit assignment to a local statistical prior while reserving the remote LLM strictly for complex intent parsing avoids the systemic failures of forcing a single model to manage both tasks simultaneously. 
\section{More Experiments for Test-Pool Accuracy}
\label{sec:more-exp-test-pool-acc}

\begin{figure}[t]
    \centering
    \includegraphics[width=\linewidth]{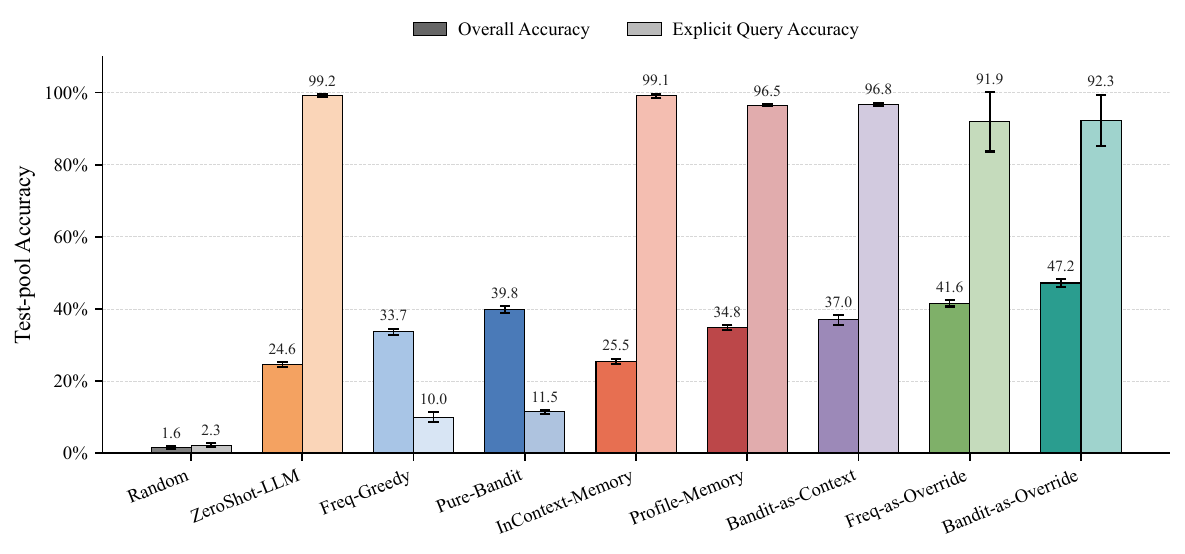}
    \caption{Performance breakdown on explicit queries evaluated using the DeepSeek-V4-Flash backbone under the Soft-0.3 preference regime. }
    \label{fig:test-pool-acc-dpsk}
    \vspace{-10pt}
\end{figure}

\begin{figure}[t]
    \centering
    \includegraphics[width=\linewidth]{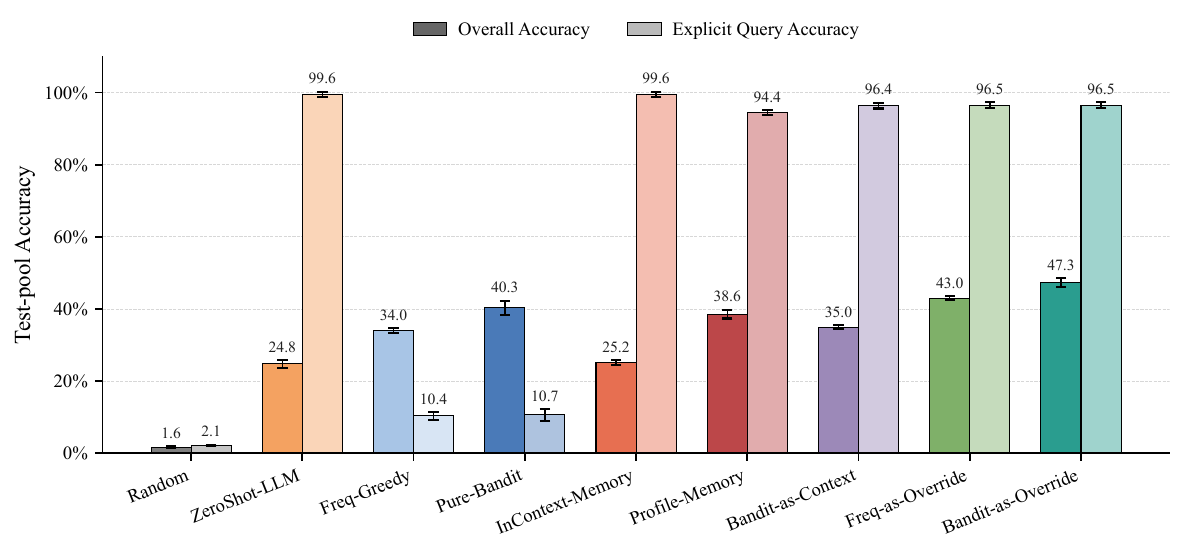}
    \caption{Performance breakdown on explicit queries evaluated using the GPT-5.2 backbone under the Soft-0.3 preference regime.}
    \label{fig:test-pool-acc-gpt52}
    \vspace{-10pt}
\end{figure}

\Cref{fig:test-pool-acc-dpsk} and \cref{fig:test-pool-acc-gpt52} provide a detailed accuracy breakdown on explicit queries utilizing the DeepSeek-V4-Flash and GPT-5.2 backbones under the Soft-0.3 preference regime. These fine-grained evaluations highlight a critical vulnerability in purely statistical agents, which suffer a precipitous drop in accuracy on explicit queries because they cannot process zero-shot semantic instructions. Furthermore, memory-augmented LLMs consistently struggle to balance dense statistical tracking with these semantic overrides. The \textsc{Local Harness} architecture elegantly resolves this limitation across diverse foundational models; by delegating historical priors to a local bandit and reserving the LLM strictly as a semantic exception handler, \textsc{Bandit-as-Override} achieves near-perfect execution on explicit instructions, thereby thoroughly validating our decoupled design. 
\section{Regret Analysis}
\label{sec:more-exp-regret}
\begin{figure*}[t]
    \centering
    \includegraphics[width=\linewidth]{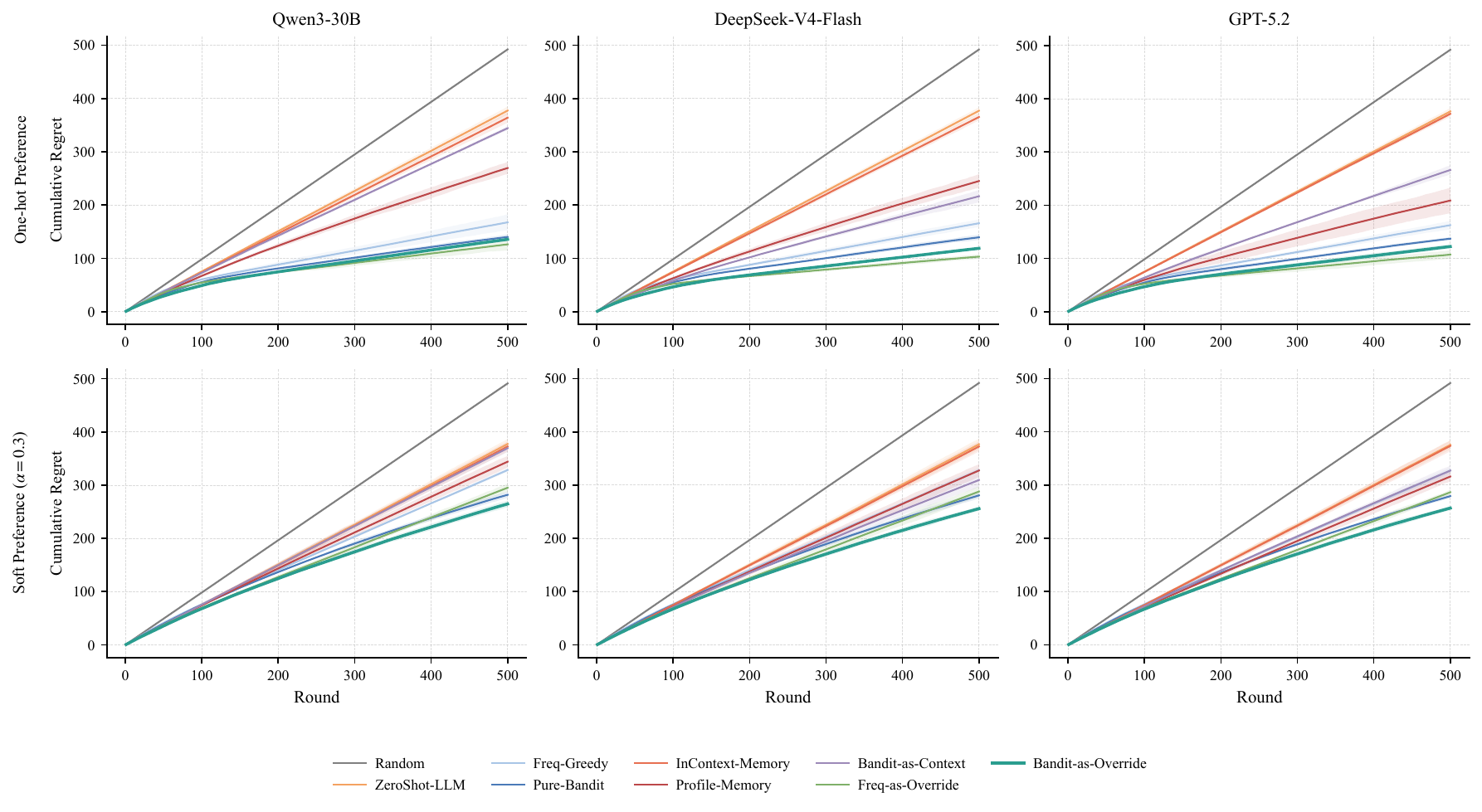}
    \caption{
    \textbf{Cumulative regret over the interaction horizon for the six main experiments.} Rows correspond to the preference regime (top: one-hot; bottom: soft preference with $\alpha=0.3$); columns correspond to the LLM backbone (\textsc{Qwen3-30B}, \textsc{DeepSeek-V4-Flash}, \textsc{GPT-5.2}). Each panel plots cumulative regret $\sum_{\tau\le t}(1-r_\tau)$, averaged across $N_u=50$ users and $S=3$ independent seeds; shaded bands denote $95\%$ confidence intervals. Lower curves indicate faster convergence and lower aggregate cost. Across all three backbones and both regimes, \textsc{Bandit-as-Override} (deep teal) attains the lowest regret throughout training, and its advantage over \textsc{Freq-as-Override} becomes more pronounced under the soft regime.}
    \label{fig:appendix_regret}
    \vspace{-10pt}
\end{figure*}

\Cref{fig:appendix_regret} reports the full cumulative-regret trajectories. Across all six panels, \textsc{Bandit-as-Override} and \textsc{Freq-as-Override} attain the lowest curve throughout training. The two memory-augmented LLM baselines (\textsc{Profile-Memory} and \textsc{InContext-Memory}) accumulate regret at a near-linear rate, indicating that prompt-injected per-user statistics do not substitute for a locally-fitted estimator. Consistent with the evenness analysis in \cref{ssec:evenness}, the gap between \textsc{Bandit-as-Override} and \textsc{Freq-as-Override} visibly widens between the top row (one-hot) and the bottom row ($\alpha=0.3$), confirming that the marginal value of exploration grows as user preference becomes stochastic.
\section{Rolling Accuracy}
\label{sec:more-exp-rolling-acc}

\begin{figure*}[t]
    \centering
    \includegraphics[width=\linewidth]{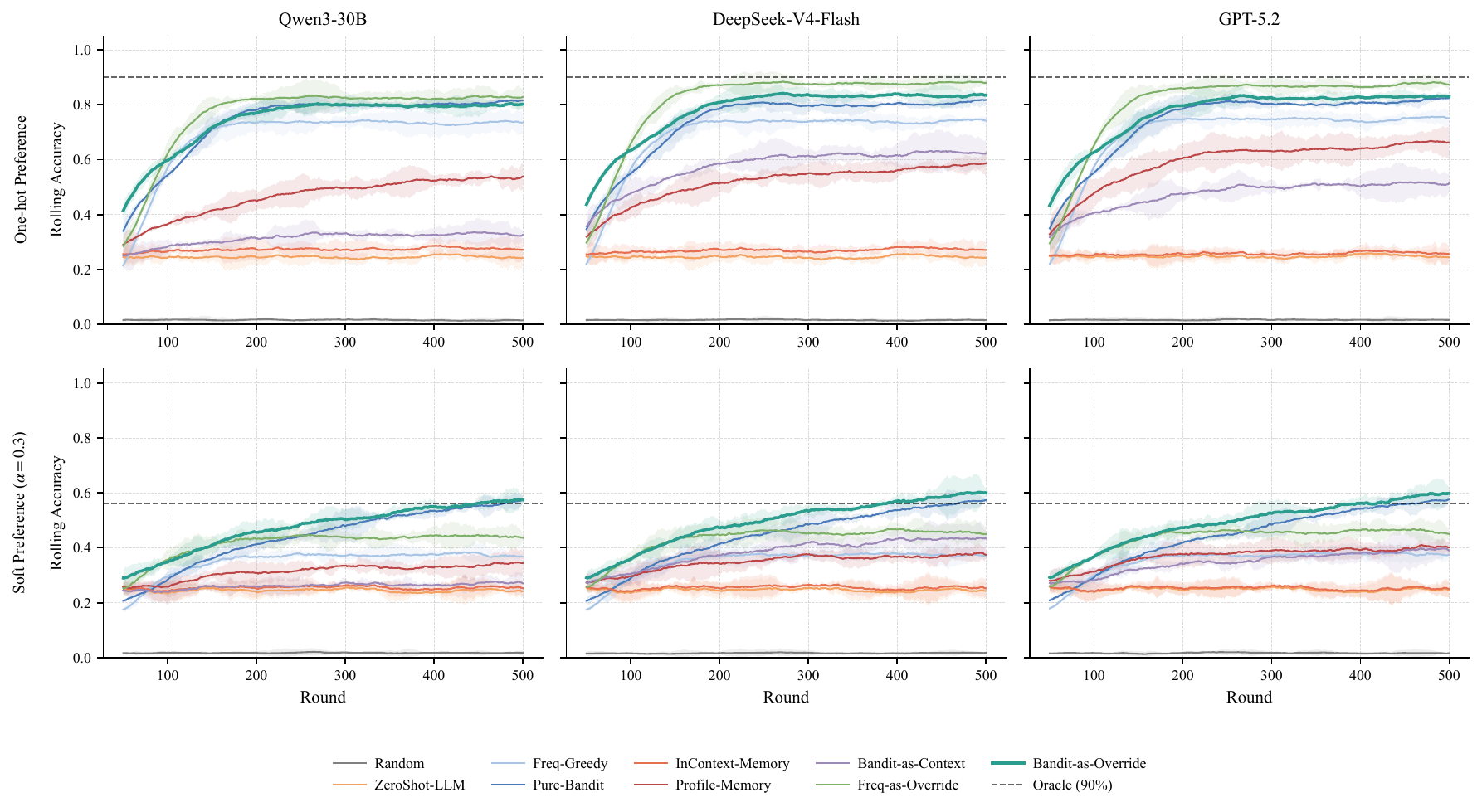}
    \caption{ \textbf{Rolling per-round accuracy along the interaction horizon for the six main experiments.} For each round $t$, we plot the mean reward over a sliding window of the previous $W=50$ rounds, averaged across $N_u=50$ users and $S=3$ seeds; shaded bands denote $95\%$ confidence intervals. 
    The dashed Oracle line marks the accuracy ceiling of \emph{any} static policy that commits to a single skill per $(u,d)$ pair; under one-hot preferences it equals $1-\rho_{\text{ood}}=0.9$ by construction, while in the soft regime it is determined by the mode of each user's Dirichlet preference. 
    Within each panel, \textsc{Bandit-as-Override} converges fastest and plateaus closest to the Oracle line; the gap to memory-augmented LLM baselines (\textsc{Profile-Memory}, \textsc{InContext-Memory}) is large under one-hot preferences and remains substantial under the soft regime.}
    \label{fig:appendix_rolling_acc}
    \vspace{-10pt}
\end{figure*}

\Cref{fig:appendix_rolling_acc} traces the per-round convergence dynamics with a sliding window of $W=50$ rounds. The dashed Oracle line marks the accuracy ceiling of \emph{any} static policy that commits to a single skill per $(u,d)$ pair; under one-hot preferences it equals $1-\rho_{\text{e}}=0.9$ by construction, while in the soft regime it is determined by the mode of each user's Dirichlet preference. Two observations stand out. First, the statistical-only agents (\textsc{Freq-Greedy} and \textsc{Pure-Bandit}) plateau \emph{at} the Oracle line, exactly as predicted: lacking access to query semantics, they can at best recover the static optimum. Second, the override-based agents visibly cross above this ceiling, because their LLM probe can re-route explicit-query rounds to the named skill. 
The curves also show that \textsc{Bandit-as-Override} converges faster than every alternative across all panels, an advantage that is most pronounced in the early rounds where the preference signal is scarce.
\section{Per-Domain Accuracy Analysis}
\label{sec:more-exp-per-domain-acc}

\begin{figure*}[t]
    \centering
    \includegraphics[width=\linewidth]{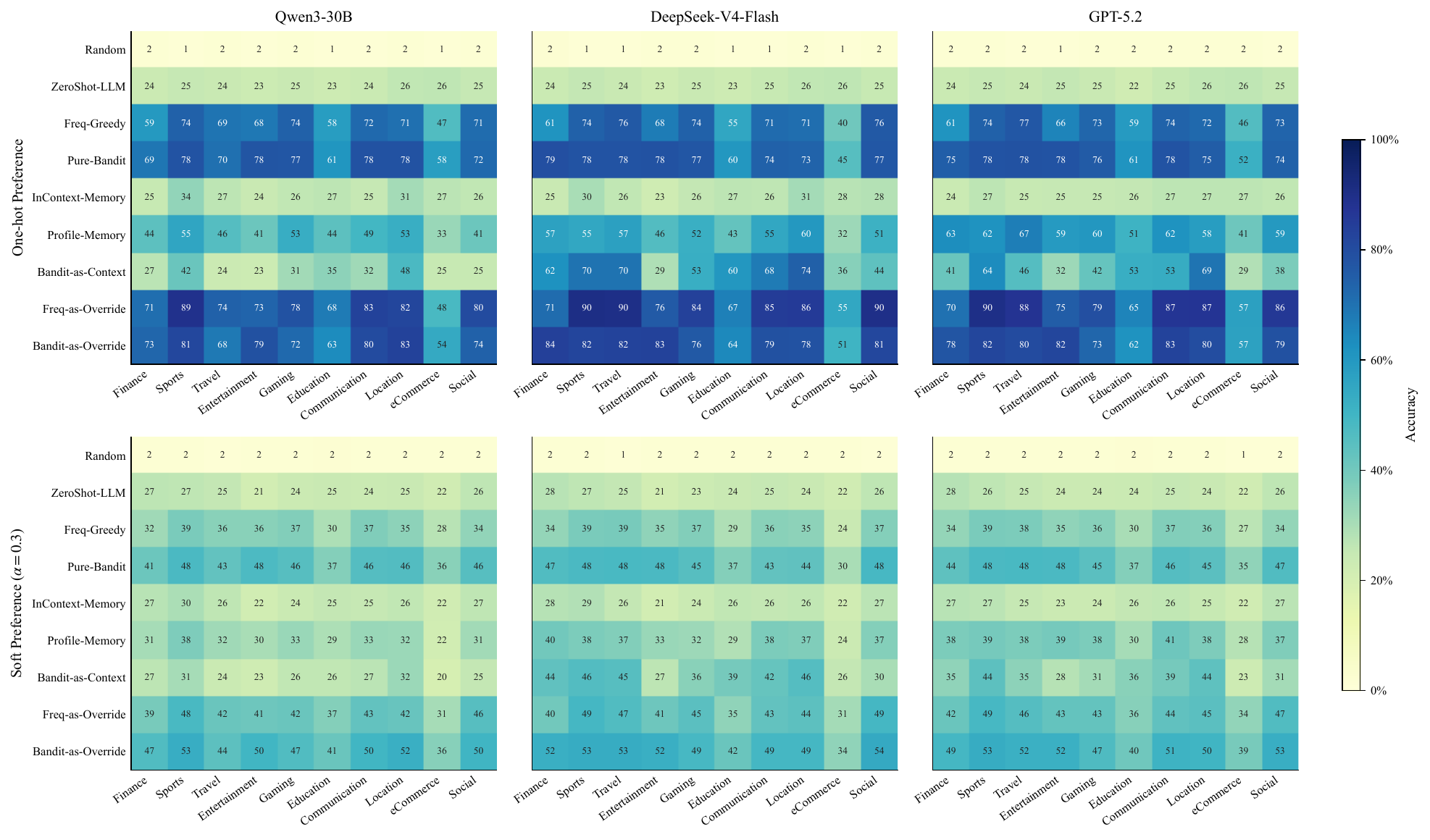}
    \caption{\textbf{Per-domain accuracy heatmaps for the six main experiments.} Rows of the figure correspond to the preference regime (top: one-hot; bottom: soft with $\alpha=0.3$); columns correspond to the LLM backbone. Within each panel, rows are the nine agents and columns are the ten \textsc{ToolBench-60} domains. Cells report the mean training-round accuracy for that $(\text{agent},\text{domain})$ pair, averaged across $N_u=50$ users and $S=3$ seeds, with darker shades denoting higher accuracy (colour scale shown on the right). The override-based family (\textsc{Freq-as-Override} and \textsc{Bandit-as-Override}, last two rows of every panel) maintains the highest accuracy uniformly across all ten domains, indicating that the architectural advantage is not concentrated in a particular topical category.
    }
    \label{fig:appendix_per_domain}
    \vspace{-10pt}
\end{figure*}

\Cref{fig:appendix_per_domain} decomposes the accuracy along the ten \textsc{ToolBench-60} domains, displayed as a $9\times 10$ heatmap per experiment. The decomposition is intended to rule out the hypothesis that the proposed architecture's advantage is driven by one or two particularly easy categories. In every one of the six panels, the bottom two rows of the heatmap (\textsc{Freq-as-Override} and \textsc{Bandit-as-Override}) are uniformly the darkest. 
By contrast, \textsc{Profile-Memory} and \textsc{InContext-Memory} exhibit substantial cross-domain variability, with several light cells indicating domains where memory-injection is particularly noisy. 
\section{Preference Recovery for Soft-Setting Results}
\label{sec:more-exp-soft-pref-recovery}

\begin{figure*}[t]
    \centering
    \includegraphics[width=\linewidth]{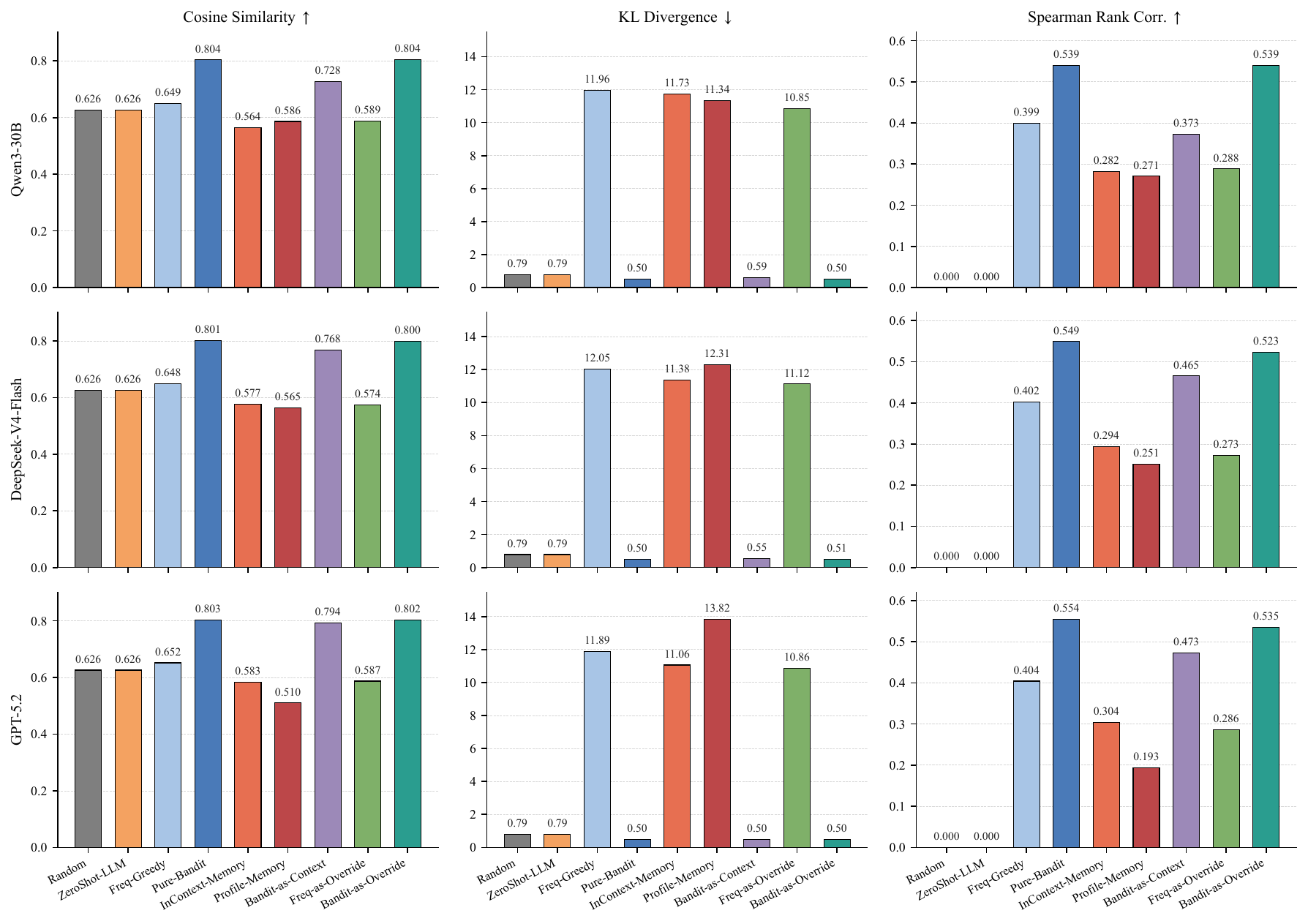}
    \caption{\textbf{Preference-distribution recovery for each agent under the soft preference regime ($\alpha=0.3$).} Rows correspond to the three LLM backbones; columns correspond to the three alignment metrics between each agent's learned per-$(u,d)$ preference $\hat p_{u,d}$ and the user's ground-truth Dirichlet distribution $\pi_u(d)$: cosine similarity, KL divergence, and Spearman rank correlation. Arrows in column headers indicate the direction of improvement. All values are aggregated over $N_u\times|\mathcal{D}|=500$ user--domain pairs per seed across $S=3$ seeds. \textsc{Pure-Bandit} and \textsc{Bandit-as-Override} are indistinguishable on all three metrics across all three backbones, confirming that the LLM override channel of our method does not corrupt the bandit's learned posterior. 
    }
    \label{fig:appendix_preference_recovery}
\end{figure*}

\Cref{fig:appendix_preference_recovery} evaluates how well each agent's \emph{learned} preference distribution $\hat p_{u,d}(k)$ matches the user's ground-truth Dirichlet preference $\pi_u(d)$ under the soft regime, along three complementary metrics. The most informative comparison is between \textsc{Pure-Bandit} and \textsc{Bandit-as-Override}: their bars are indistinguishable on all three metrics and across all three backbones (e.g., Spearman $\approx 0.53$ on every panel), demonstrating that interleaving the LLM override channel \emph{does not corrupt the bandit's posterior}. 
By contrast, \textsc{Freq-as-Override} attains a competitive Spearman score but catastrophic KL divergence, because frequency counting can recover the \emph{mode} of $\pi_u(d)$ but not its full shape. Memory-augmented LLM baselines show the same pathology in even more pronounced form. Taken together, these results support the architectural separation we propose: the local statistical primitive is responsible for posterior estimation, while the LLM is restricted to a narrow override channel that, by construction, leaves the learned posterior intact.

\section{Query Templates}
\label{sec:query-templates}
\newtcolorbox{promptbox}[1]{%
    breakable,
    enhanced,
    colback=gray!4,
    colframe=gray!55!black,
    boxrule=0.45pt,
    arc=2.5pt,
    left=6pt, right=6pt, top=5pt, bottom=5pt,
    title=\textsf{\small\bfseries #1},
    fonttitle=\sffamily\small\bfseries,
    coltitle=white,
    colbacktitle=gray!55!black,
    attach boxed title to top left={yshift=-2mm, xshift=4mm},
    boxed title style={colback=gray!55!black, sharp corners, boxrule=0pt},
    fontupper=\small\ttfamily,
    before upper={\setlength{\parindent}{0pt}\setlength{\parskip}{2pt}},
}

\newtcolorbox{examplebox}[1]{%
    breakable,
    enhanced,
    colback=blue!2,
    colframe=blue!35!black,
    boxrule=0.45pt,
    arc=2.5pt,
    left=6pt, right=6pt, top=5pt, bottom=5pt,
    title=\textsf{\small\bfseries #1},
    fonttitle=\sffamily\small\bfseries,
    coltitle=white,
    colbacktitle=blue!35!black,
    attach boxed title to top left={yshift=-2mm, xshift=4mm},
    boxed title style={colback=blue!35!black, sharp corners, boxrule=0pt},
    fontupper=\small,
    before upper={\setlength{\parindent}{0pt}\setlength{\parskip}{2pt}},
}

This appendix documents the textual prompts used throughout our benchmark.
We organize them into three parts: (i)~the prompts we feed to an LLM to generate the query template banks, before any experiment is run; (ii)~representative examples of the resulting \emph{standard} and \emph{explicit} queries on a subset of domains; and (iii)~the prompts that each agent issues to a language model at inference time during the online evaluation.

\subsection{Query Template Generation}
\label{app:prompts:generation}

We construct the benchmark's query bank in a single offline pass using \textsc{GPT-5.2}~\citep{gpt52}. For each of the ten \textsc{ToolBench-60} domains, we issue two distinct prompts. The \emph{standard} prompt produces queries that describe a user's information need in natural language but \emph{deliberately avoid naming any specific tool (skill)}; such queries can be answered correctly only if the agent has inferred the user's underlying preference. The \emph{explicit} prompt produces queries that explicitly name one particular tool (skill) from the domain's six options, so that a competent language model can recover the target tool from the text alone. We generate 20 standard templates per domain and 1-5 explicit templates per tool; the exact counts and the resulting raw text are released alongside the code.

\begin{promptbox}{Standard query generation prompt}
Generate \{n\_per\_domain\} diverse user queries that someone might ask
when needing a tool/API from the ``\{domain\}'' category.

Tools in this category:\\
\hspace*{1em}- \{tool\_1\}: \{description\_1\}\\
\hspace*{1em}- \{tool\_2\}: \{description\_2\}\\
\hspace*{1em}\ldots

IMPORTANT RULES:\\
\hspace*{1em}- Do NOT mention any specific tool name in the queries\\
\hspace*{1em}- Each query should be a natural user request (1-2 sentences)\\
\hspace*{1em}- Cover diverse use cases within this category\\
\hspace*{1em}- Write in English

Return a numbered list (1. query, 2. query, \ldots).
\end{promptbox}

\begin{promptbox}{Explicit query generation prompt}
Generate \{n\_per\_tool\} user queries that explicitly request using
``\{tool\}'' (a \{domain\} tool: \{description\}).

IMPORTANT RULES:\\
\hspace*{1em}- Each query MUST mention ``\{tool\}'' by name\\
\hspace*{1em}- The query should be a natural user request (1-2 sentences)\\
\hspace*{1em}- Write in English

Return a numbered list.
\end{promptbox}

The dual constraint ``do not mention any tool'' for standard queries versus ``must mention this tool by name'' for explicit queries creates a clean binary partition of the benchmark's stimuli along the axis our method is designed to handle: standard queries are decidable only via \emph{personalised} statistical inference, while explicit queries are decidable from the text alone. The standard-versus-explicit dichotomy is the structural property that the two-bar test-accuracy decomposition in the main paper makes visible. 

\subsection{Example Queries by Domain}
\label{app:prompts:examples}

To give the reader a concrete sense of the linguistic style and difficulty of the synthetic stimuli, we list three illustrative domains from \textsc{ToolBench-60} below. For each domain we show three standard queries (which require user-preference inference) and three explicit queries that each name a distinct tool (which require only text-level recognition). The full banks of 200 standard and 60 explicit templates across all ten domains are released with the benchmark configuration.

\begin{examplebox}{Finance domain}
\textbf{Standard queries:}
\begin{itemize}\setlength{\itemsep}{1pt}
    \item I need end-of-day data for major indices and ETFs, plus a way to backfill missing market days automatically. 
    \item Give me daily and historical metrics for thousands of cryptocurrencies, including market cap and volume, with simple pagination and caching support.
    and mailing checks to new payees; can you flag risky accounts based on prior negative history?
    \item Pull real-time and 10-year historical daily prices for a list of US stocks, and also include splits and dividends where available.
\end{itemize}

\smallskip
\textbf{Explicit queries:}
\begin{itemize}\setlength{\itemsep}{1pt}
    \item Can you run an \emph{``Extended ACH and Check Prescreen''} on this routing and account number to confirm the account is open and in good standing before we accept an ACH payment?
    \item Use \emph{Twelve Data} to pull real-time and 1-minute intraday prices for AAPL for today, and return the latest quote plus OHLCV
    data.
    \item I want to integrate the \emph{ChangeNOW crypto exchange} into my checkout so customers can buy and exchange crypto; can you outline the steps and what I need to get started?
\end{itemize}
\end{examplebox}

\begin{examplebox}{Travel domain}
\textbf{Standard queries:}
\begin{itemize}\setlength{\itemsep}{1pt}
    \item I need to search and compare round-trip flight options from NYC to Tokyo for flexible dates next month, including price trends and the cheapest day to fly.
    \item Can you find the lowest fares for a multi-city itinerary (London $\to$ Rome $\to$ Athens $\to$ London) and return results with cabin class, baggage rules, and cancellation terms?
    \item I want to monitor a specific route (SFO to LAS) and alert me when the price drops below a threshold within the next 60 days.
\end{itemize}

\smallskip
\textbf{Explicit queries:}
\begin{itemize}\setlength{\itemsep}{1pt}
    \item Use the \emph{Skyscanner API} to find the cheapest round-trip flights from New York (JFK) to London (LHR) for flexible dates in the next two months, and return the top 10 options.
    \item Use \emph{Airbnb listings} to pull all active listings in the Lisbon market, including nightly price, cleaning fee, minimum stay, and availability for the next 60 days.
    \item Use \emph{webcams.travel} to find nearby landscape webcams around my current location, and show the top 10 with a quick
    preview and distance for each.
\end{itemize}
\end{examplebox}

\begin{examplebox}{Social domain}
\textbf{Standard queries:}
\begin{itemize}\setlength{\itemsep}{1pt}
    \item I need to fetch a user’s public profile info and most recent posts, then compute engagement metrics and growth over time. 
    \item Can you help me set up phone-number verification via SMS with a single API call, and automatically expire codes after five minutes?
    \item I want to automate posting to my social account on a schedule, rotate proxies, and avoid triggering anti-bot blocks. 
\end{itemize}

\smallskip
\textbf{Explicit queries:}
\begin{itemize}\setlength{\itemsep}{1pt}
    \item Use \emph{Twitter AIO} to monitor tweets mentioning my brand name in real time and summarize the key themes every hour. Include links, author handles, and engagement stats for the top posts. 
    \item Can you integrate \emph{Branch Metrics} in our iOS and Android apps to track sign-up and purchase events and tie them back to the original referral source?
    \item Can you use \emph{TikTok Video No Watermark\_v2} to download this TikTok and give me the direct no-watermark video link: https://www.tiktok.com/@user/video/1234567890? 
\end{itemize}
\end{examplebox}

A few features are worth pointing out. Standard queries are \emph{intentionally skill-agnostic}: a competent agent that has not learned the user's preference cannot do meaningfully better than uniform random among the six in-domain tools. Explicit queries, by contrast, quote the target skill name (in our shown examples, we render the quoted phrase in italics for readability), so a language model that can read the query can identify the target skill without any history. The two query types together cover the two sub-problems: personalization and explicit override, which motivate our architectural split. 

\subsection{Inference-Time Prompts}
\label{app:prompts:inference}

At inference time, every agent that involves an LLM call issues one or two structured prompts per round, each constrained to return a strict JSON object. We list below the three prompts that drive the comparisons reported in the main paper: the shared domain classifier (used by \emph{all} agents that require an inferred domain, including \textsc{Pure-Bandit}), the binary override probe used by \textsc{Bandit-as-Override} and \textsc{Freq-as-Override}, and the chain-of-thought selection prompt used by \textsc{Bandit-as-Context}. For brevity we omit the prompts of the LLM-only baselines (\textsc{ZeroShot-LLM}, \textsc{InContext-Memory}, \textsc{Profile-Memory}), which differ only in what user-history context they prepend to a similar selection instruction.

\paragraph{Shared domain classifier.}
This prompt is issued exactly once per query and its inferred-domain output is broadcast to every domain-aware agent. The placeholder \texttt{\{domains\_block\}} expands to a multi-line listing of the ten domains together with the first four tool names of each. Using a shared classifier guarantees that all agents operate over the same domain signal, removing an otherwise unfair advantage that statistical agents would receive from ground-truth domain access.

\begin{promptbox}{Shared domain classifier}
Which domain does this query belong to?

Domains and their tools:\\
\{domains\_block\}

Query: ``\{query\}''\\
Reply with only JSON: \{``domain'': ``<exact domain name>''\}
\end{promptbox}

\paragraph{Override probe} (\textsc{Bandit-as-Override} and \textsc{Freq-as-Override}). 
The defining design choice of \textsc{Bandit-as-Override} agent and \textsc{Freq-as-Override} is to ask the LLM a \emph{single binary question}: does the query explicitly name a tool? The probe deliberately omits tool descriptions and includes only the tool \emph{names} for the agent's inferred domain, so the LLM's task reduces to lexical recognition rather than open-ended selection. This narrow interface is what allows the LLM to act as an exception handler without contaminating the bandit's default selection on the $\sim$90\% of standard queries.

\begin{promptbox}{Override probe}
Does this query explicitly request a specific tool by name?\\
Available tools: \{tool\_1\}, \{tool\_2\}, \ldots, \{tool\_K\}\\
Query: ``\{query\}''

If the query mentions a specific tool name, reply:\\
\hspace*{1em}\{``override'': true, ``tool'': ``<tool name>''\}\\
If not, reply:\\
\hspace*{1em}\{``override'': false\}
\end{promptbox}

\paragraph{Selection-with-prior} (\textsc{Bandit-as-Context}). 
For comparison, the \textsc{Bandit-as-Context} baseline exposes the bandit's posterior to the LLM as a tempered-softmax prior over the domain's six tools, and asks the LLM to perform the \emph{full selection} itself rather than answering an override question. The placeholder \texttt{\{tools\_block\}} expands to lines of the form ``\texttt{- \{tool\_name\}: 38\%}'' sorted by descending prior. 

\begin{promptbox}{Selection-with-prior prompt}
Select the best tool for this query.\\
Available tools (with learned user preference \%):\\
\{tools\_block\}

Query: ``\{query\}''

If the query explicitly names a tool, use it. Otherwise, prefer the
tool with the highest user preference.\\
Reply with only JSON: \{``tool'': ``<tool name>''\}
\end{promptbox}

In all three prompts we set the decoding temperature to $0.0$ and parse the response as JSON. On parse failure (empty body, malformed JSON, or a tool name that does not appear in the domain's tool list), the client first attempts a regular-expression and substring match against the candidate tools, and only on total failure falls back to the default action (the first listed domain for the classifier, the bandit's selection for the override probe, and the first tool for \textsc{Bandit-as-Context}). Empirically the fallback path fires on $\lesssim 0.5\%$ of calls with the backbones reported in \cref{ssec:hardware}.

\section{License of Scientific Artifacts}
Our constructed simulation sandbox, \textsc{ToolBench-60}, is derived from the original \textsc{ToolBench}~\citep{qin2023toolllm} dataset. We acknowledge the creators of ToolBench, which is distributed under the Apache-2.0 License. The newly curated \textsc{ToolBench-60} benchmark is open-sourced and released under the MIT License to facilitate further research in personalized agent interactions. 
\section{Use of AI Assistant}
During the preparation of this work, we utilized AI assistants to enhance productivity in both software development and manuscript preparation. Specifically, Claude Code was employed to assist in drafting, structuring, and debugging the underlying experimental framework and execution pipeline. Additionally, Google's Gemini was utilized to help refine, draft, and polish the textual content of this manuscript. In all cases, we carefully reviewed, verified, and take full responsibility for the accuracy and originality of the submitted work. 

\end{document}